\documentclass[11pt]{article}

\usepackage[margin=1in]{geometry}
\usepackage{microtype}
\usepackage{graphicx}
\usepackage{subcaption}
\usepackage{booktabs}
\usepackage{hyperref}
\usepackage{xurl}
\usepackage{float}
\usepackage{enumitem}
\usepackage{authblk}
\usepackage{natbib}

\usepackage{amsmath}
\usepackage{amssymb}
\usepackage{mathtools}
\usepackage{amsthm}
\usepackage[capitalize,noabbrev]{cleveref}

\newtheorem{theorem}{Theorem}
\newtheorem{lemma}{Lemma}
\newtheorem{corollary}{Corollary}
\newtheorem{proposition}{Proposition}
\newtheorem{remark}{Remark}
\newtheorem{assumption}{Assumption}
\crefname{assumption}{Assumption}{Assumptions}

\begin{document}

\title{Coupling-robust accuracy in multiphysics physics-informed neural networks
  via Kronecker-preconditioned optimization}

\author[1]{Youngjae Park\textsuperscript{\,$\dagger$}}
\author[2]{Jaemin Kim\textsuperscript{\,$\dagger$}}
\author[1]{Junghwa Hong\thanks{Corresponding author: \texttt{hongjh32@korea.ac.kr}}}
\affil[1]{Dept.\ of Control and Instrumentation Engineering, Korea University, Sejong, South Korea}
\affil[2]{BK21 FOUR Smart Mobility Education and Research Team, Korea University, Sejong, South Korea}

\date{Preprint. August 2026.}

\maketitle
\let\thefootnote\relax\footnotetext{\textsuperscript{$\dagger$}\,These authors contributed equally.}
\setcounter{footnote}{0}

\begin{abstract}
Physics-informed neural networks (PINNs) offer a mesh-free route to solving coupled multiphysics systems, but their accuracy degrades systematically as inter-equation coupling strengthens, and inverse-gradient-norm loss balancing alone does not reliably prevent this failure. This study explains why coupling degrades PINN training and identifies an optimizer structure that removes the dependence, replacing case-by-case tuning with a principled remedy. Through a neural tangent kernel analysis, we prove that the standard kernel's spectral radius grows as $\Omega(\gamma^2)$ with coupling strength $\gamma$, whereas block-diagonal Gauss--Newton (GN) preconditioning bounds it by the number of networks $S$, independent of $\gamma$; no diagonal preconditioner recovers this bound for any coupling type or loss weighting. We realize block-diagonal GN preconditioning through the Kronecker-preconditioned optimizer SOAP combined with inverse-gradient-norm loss balancing (SOAP+GradNorm) and evaluate it across 222 experiments on four benchmarks of increasing difficulty. Across all systems, SOAP+GradNorm is the only configuration whose degradation remains bounded in every regime tested: it preserves weak-coupling accuracy in linear problems and limits degradation to $2.3\times$ in the nonlinear Nernst--Planck--Poisson system, whereas Adam-based training leaves the $L_2$ error above the $0.1$ failure threshold. The same effect applies to a six-residual, four-network, 2D electro-osmotic flow where the electric double layer is resolved down to a Debye length of $\varepsilon = 0.01$ on an $x$-invariant reference solution. These results recast coupling-induced accuracy loss as a problem of the preconditioner's structure rather than loss weighting and identify Kronecker preconditioning as a structural lever for training PINNs on strongly coupled, stiff multiphysics systems.
\end{abstract}

\section{Introduction}

Physics-informed neural networks (PINNs) \citep{raissi2019physics} embed partial differential equation (PDE) residuals into the loss function, enabling mesh-free solutions. These networks have achieved broad success \citep{karniadakis2021physics,hao2024pinnacle} across the fields of fluid mechanics \citep{raissi2020hidden,jin2021nsfnets}, heat transfer \citep{cai2021physics}, bioengineering \citep{kissas2020machine,yin2021non}, materials science \citep{zhang2022analyses}, and subsurface transport \citep{he2020physics}; see \citet{cuomo2022scientific} for a review.

Alongside these achievements, a line of work established that the difficulty associated with training PINNs is fundamentally an optimization and conditioning problem. \citet{krishnapriyan2021characterizingpossiblefailuremodes} showed that PINNs can fail even on simple problems because of ill-conditioned loss landscapes, and \citet{wang2022and} traced this failure to the spectral structure of the neural tangent kernel (NTK), where disparate eigenvalues cause loss components to converge at incompatible rates. This diagnosis motivated a first generation of remedies but was developed almost entirely for single-equation problems.

Two complementary branches followed. One improved the representation or training dynamics of a single network---Fourier-feature embeddings to counter spectral bias \citep{wang2021eigenvector}, adaptive activations \citep{jagtap2020adaptive}, domain decomposition \citep{jagtap2020extended}, residual-based adaptive sampling \citep{wu2023comprehensive}, and causal or sequential training schedules \citep{wang2022respecting,mattey2022novel}. The other rebalanced competing loss terms, reweighting residuals from gradient statistics \citep{wang2021understanding} or adaptive attention \citep{mcclenny2023self,anagnostopoulos2024residual}. Although these advances substantially improve single-PDE accuracy, loss balancing remains a heuristic applied after the fact. Furthermore, none of these methods targets the imbalance that arises specifically from inter-equation coupling.

For coupled systems where multiple PDEs interact through shared variables and residuals compete for optimizer capacity, existing PINN approaches manage the interaction architecturally, including mixed variational formulations \citep{harandi2024mixed}, sequential field-by-field (stress-split) training \citep{haghighat2022physics}, and segregated per-field networks \citep{sun2024physics}. These choices are effective but are adopted heuristically; the coupling itself is treated as an implementation obstacle, not an object of analysis.

Consequently, despite this progression from diagnosis to single-equation remedies to multiphysics architectures, to our knowledge, no prior study has systematically quantified how accuracy scales with coupling strength across multiple systems, nor explained why preconditioning---rather than loss reweighting---resolves the resulting degradation.

In this study, we aim to (i) characterize how PINN accuracy degrades as inter-equation coupling strengthens across representative multiphysics benchmarks and (ii) identify and prove why a specific optimizer structure---block-diagonal Gauss--Newton (GN) preconditioning---removes this dependence, thereby replacing case-by-case loss-weight tuning with a principled remedy. We then realize this structure in practice and evaluate it at scale.

\textbf{Contributions.}\;
(1)~We prove that block-diagonal GN preconditioning bounds $\lambda_{\max}(K_P) \leq S$, independently of the coupling strength $\gamma$, while the unpreconditioned NTK grows as $\Omega(\gamma^2)$. We verify the growth on the linearly coupled benchmarks and the equality $\lambda_{\max}(K_P) = S$ on all three 1D benchmarks.
(2)~We show that Adam's diagonal preconditioning cannot recover this bound in the overparameterized regime for any coupling type or loss weighting and that its residual-dynamics kernel grows as $\Theta(\gamma)$, concentrating spectral energy away from coupled residuals. For one-way coupling, we prove that this limitation is class-wide in iteration complexity: under linearized residual dynamics, every diagonal preconditioner, adaptive or not, requires $\Omega(\gamma)$ iterations on the driving residual ($\Omega(\gamma^2)$ if fixed), where block GN preconditioning requires $O(1)$.
(3)~Across 222 experiments on four benchmark systems, SOAP+GradNorm is the only configuration whose degradation remains bounded in every regime tested, and the only one that succeeds on the 2D six-residual system, preserving weak-coupling accuracy in the linear systems and limiting degradation to $2.3\times$ in the nonlinear Nernst--Planck--Poisson system.
(4)~We show that the same optimizer-structure effect persists when the system is scaled up to a 2D, six-residual electro-osmotic problem solved with four networks, in which the Debye layer is resolved rather than replaced (dimensionless Debye length $\varepsilon = 0.01$).

\section{Background}
\label{sec:background}

\subsection{Multiphysics PINNs}
\label{sec:multiphysics}

Consider $N$ coupled PDEs $\mathcal{R}_i[\hat{\mathbf{u}}](x) = 0$,
$i = 1,\ldots,N$, with boundary conditions
$\mathcal{B}_j[\hat{\mathbf{u}}](x) = 0$ on $\partial\Omega$,
where $\hat{\mathbf{u}} = (\hat{u}_1, \ldots, \hat{u}_N)$ and residuals are evaluated via automatic differentiation \citep{baydin2018automatic}.
Two network architectures are commonly used.
In the \emph{single-network} approach, one network $f(x;\,\theta) : \mathbb{R}^d \to \mathbb{R}^N$ outputs all the fields simultaneously, and all parameters are shared across fields \citep{raissi2019physics,lu2021deepxde}.
In the \emph{segregated-network} approach, each field is approximated by an independent network $\hat{u}_i(x;\,\theta_i)$ with disjoint parameters ($\theta_i \cap \theta_j = \emptyset$ for $i \neq j$), as adopted in electrokinetic multiphysics \citep{merdasi2023physics,sun2024physics} and multispecies heat transfer \citep{laubscher2021simulation}.
This design has been adopted in several recent multiphysics PINN studies and enables per-field architectural tuning.
In a direct comparison on a coupled multispecies problem, the segregated variant was reported to reach substantially lower losses and maintain species conservation more effectively than a single shared network \citep{laubscher2021simulation}.
Various enhancements to segregated training have been proposed, including sequential field-by-field optimization \citep{haghighat2022physics} and mixed variational formulations \citep{harandi2024mixed}.
The theoretical analysis in \cref{sec:theory} is stated for $S \geq 1$ parameter groups and therefore covers both architectures, with the single network corresponding to $S = 1$ (see \cref{rem:single_net}).

The training loss is
\begin{equation}
  \mathcal{L}(\theta)
  = \sum_{i} \lambda_i \frac{1}{N_r}\sum_{k=1}^{N_r}\mathcal{R}_i(x_k^r)^2
  + \sum_{j} \lambda_j^{\mathrm{b}}
    \frac{1}{N_b}\sum_{k=1}^{N_b}\mathcal{B}_j(x_k^b)^2,
  \label{eq:loss}
\end{equation}
where $\{x_k^r\}$ and $\{x_k^b\}$ are interior collocation and
boundary points, respectively, and $\lambda_i$ and $\lambda_j^{\mathrm{b}}$
are loss weights determined by the balancing scheme
(fixed at one when no balancing is applied; see \cref{sec:balancing}).
Throughout, we collect the residual loss weights into the block-diagonal matrix $\Lambda = \operatorname{blkdiag}(\lambda_1 I_{N_r}, \ldots, \lambda_M I_{N_r})$; we write $\Lambda > 0$ when $\lambda_i > 0$ for every $i$, and call $\Lambda$ \emph{uniform} when $\lambda_1 = \cdots = \lambda_M$.

\subsection{SOAP Optimizer}

SOAP \citep{vyas2025soap} applies Kronecker-factored preconditioning to each weight matrix, building on the Shampoo algorithm \citep{gupta2018shampoo}: a layer-wise preconditioner is constructed from running averages of left and right gradient outer products, replacing the standard gradient update $\theta \leftarrow \theta - \eta \nabla\mathcal{L}$ with $\theta \leftarrow \theta - \eta\, P\, \nabla\mathcal{L}$. \citet{morwani2024new} showed that SOAP's Kronecker factors approximate the optimal Kronecker factorization of each layer's GN matrix $H_{\mathrm{GN}}^{(l)}$, and \citet{wang2026gradient} further established that under a settled-phase assumption on Adam's second-moment exponent ($s = 2$), SOAP's update approximates the block-diagonal GN preconditioner. \citet{wang2026gradient} also demonstrated state-of-the-art single-PDE PINN accuracy across 10 benchmarks. However, these benchmarks are evaluated at a fixed set of PDE parameters. How accuracy scales as inter-equation coupling strengthens is not addressed there, and that is the question examined in this study. SOAP's Kronecker preconditioning operates per weight matrix (layer-wise). In our segregated-network setting, where each network maintains disjoint parameters, there are no cross-network weight matrices. Therefore, SOAP's layer-wise block-diagonal structure is automatically network-wise block-diagonal---a property we formalize in \cref{prop:soap_net} and associate with the GN preconditioning analyzed in \cref{sec:theory}. This gap motivates the present study.

\subsection{Gradient Balancing}
\label{sec:balancing}
When no balancing is applied, all loss weights are fixed at $\lambda_i = 1$.

\textbf{Inverse gradient-norm weighting (GradNorm).}\footnote{Following the terminology of \citet{wang2026gradient}. Not to be confused with the GradNorm algorithm of \citet{chen2018gradnorm}, which balances tasks by matching relative training rates rather than equalizing gradient norms.}\; Following \citet{wang2026gradient}, we set $\lambda_i \propto 1/\|\nabla_\theta \mathcal{L}_i\|_2$ with an exponential moving average (momentum~$= 0.9$, update frequency $= 1000$ steps). This equalizes $\|\lambda_i \nabla \mathcal{L}_i\|_2$ across all $i$, a principle introduced in \citep{wang2021understanding} and systematized in \citep{wang2023expert}. We adopted this as the primary balancing scheme because the equal gradient-norm condition is required for the single-step convergence bound of preconditioned gradient descent to be tight \citep{wang2026gradient}.

\section{Benchmark Systems}
\label{sec:benchmarks}

In this study, we use four systems: one-dimensional thermoelasticity, a three-species reaction--diffusion chain, the one-dimensional Nernst--Planck--Poisson system, and a 2D electro-osmotic flow (EOF).

\subsection{1D Thermoelasticity (Thermo: Linear, One-way Coupling)}
\label{sec:thermo}

We use the one-dimensional steady thermoelasticity system. The following equations
\begin{equation}
  -\kappa\, T'' = f_T, \qquad -E\, u'' + \gamma\, T' = f_u
\end{equation}
are valid for $x \in [0, 1]$ with Dirichlet boundary conditions (BCs) set according to the exact solutions,
\begin{equation}
  T(0) = T(1) = 1, \qquad u(0) = u(1) = 0.
\end{equation}
The coupling is one-way: temperature $T$ drives displacement $u$ through the thermal stress term $\gamma T'$ but $u$ does not affect $T$. Following the method of manufactured solutions, we prescribe $T^*(x) = \sin(\pi x) + 1$, $u^*(x) = \sin(2\pi x)$, which determine the source terms:
\begin{equation}
  f_T = \kappa\pi^2\sin(\pi x), \qquad f_u = E(2\pi)^2\sin(2\pi x) + \gamma\pi\cos(\pi x).
\end{equation}
The coupling parameter $\gamma \in \{1, 5, 10, 25, 50, 100\}$, where the thermal conductivity and the elastic modulus are $\kappa = E = 1$.

\subsection{1D Reaction--Diffusion [(RD): Linear, Bidirectional]}

We use a three-species reaction--diffusion chain $A \rightleftharpoons B \rightleftharpoons C$. The equations
\begin{align}
  D_1\, c_A'' - k_1 c_A + k_2 c_B &= f_1 \notag\\
  D_2\, c_B'' + k_1 c_A - (k_2 + k_3) c_B + k_4 c_C &= f_2 \\
  D_3\, c_C'' + k_3 c_B - k_4 c_C &= f_3 \notag
\end{align}
are valid for $x \in [0, 1]$ with Dirichlet BCs set according to the exact solutions:
\begin{gather}
  c_A(0) = c_A(1) = 2, \quad c_B(0) = 3,\; c_B(1) = 1, \notag\\
  c_C(0) = c_C(1) = 2.
\end{gather}
We denote the forward rate constant $k_{\mathrm{f}} \equiv k_1 = k_3$. Backward rates are $k_2 = k_4 = 1$, with $k_{\mathrm{f}} \in \{1, 5, 10, 30, 50, 100\}$ (abbreviated $k$ in tables and figures), and the species diffusivities $D_i = 1$. Manufactured solutions $c_A^* = \sin(\pi x) + 2$, $c_B^* = \cos(\pi x) + 2$, $c_C^* = \sin(2\pi x) + 2$ determine the following source terms
\begin{align}
  f_1 &= -\pi^2\sin(\pi x) - k_{\mathrm{f}}\bigl(\sin(\pi x) + 2\bigr) + \cos(\pi x) + 2, \notag\\
  f_2 &= -\pi^2\cos(\pi x) + k_{\mathrm{f}}\bigl(\sin(\pi x) + 2\bigr) \notag\\
      &\quad - (1+k_{\mathrm{f}})\bigl(\cos(\pi x) + 2\bigr) + \sin(2\pi x) + 2, \notag\\
  f_3 &= -(2\pi)^2\sin(2\pi x) + k_{\mathrm{f}}\bigl(\cos(\pi x) + 2\bigr) - \bigl(\sin(2\pi x) + 2\bigr).
\end{align}

\subsection{1D Nernst--Planck--Poisson [(NP+P): Nonlinear]}

We use the 1D steady NP+P system. The equations
\begin{align}
  c_+'' + (c_+ \phi')' &= 0 \notag\\
  c_-'' - (c_- \phi')' &= 0 \\
  \varepsilon^2 \phi'' + (c_+ - c_-) &= 0 \notag
\end{align}
are valid on $x \in [0, 1]$ with Boltzmann equilibrium BCs ($\zeta = 1.0$):
\begin{equation}
  \phi(0) = \zeta,\; c_\pm(0) = e^{\mp\zeta}, \qquad \phi(1) = 0,\; c_\pm(1) = 1.
\end{equation}
The exact solution is the 1D Poisson--Boltzmann equilibrium, obtained by solving the nonlinear boundary value problem (BVP) $\varepsilon^2\phi'' = 2\sinh(\phi)$ numerically (\texttt{solve\_bvp} from SciPy \citep{virtanen2020scipy}), with $c_\pm^* = e^{\mp\phi^*}$. The parameter $\varepsilon \in \{1.0, 0.5, 0.2, 0.1\}$ controls coupling stiffness.

\subsection{2D EOF}
\label{sec:eof}

EOF arises from the coupling of ionic transport, electrostatics, and viscous flow within the electric double layer (EDL) that forms near charged surfaces \citep{probstein2005physicochemical,kirby2004zeta,mani2020electroconvection}.

Prior PINN studies on electrokinetic transport simplify the coupled system differently: a prescribed space-charge density without ion transport \citep{tao2025lstm}, a Debye-length nondimensionalization that removes the stiffness parameter \citep{huang2025enriched}, an equilibrium Poisson--Boltzmann closure linearized for $\zeta \ll 25$\,mV \citep{merdasi2023physics}, a Helmholtz--Smoluchowski slip condition in the flow-coupled case \citep{sun2024physics}, and operator learning from spectral-element data instead of residual minimization \citep{cai2021deepm}. In contrast, this benchmark assembles the difficulties of solving a coupled problem by residual minimization alone: six residuals competing for optimizer capacity, two-way coupling between charge transport and flow, nonlinear migration terms, a wall layer whose width is set by a stiffness parameter, and field magnitudes that differ by up to four orders.

The fully coupled NP+P+Stokes system on a rectangular channel $\Omega = [0,1]^2$ can be defined as
\begin{align}
  \varepsilon^2 \nabla^2 \phi + (c_+ - c_-) &= 0 \notag\\
  \nabla^2 c_+ + \nabla \cdot (c_+ \nabla\phi) &= 0 \notag\\
  \nabla^2 c_- - \nabla \cdot (c_- \nabla\phi) &= 0 \\
  -\nabla p + \mu \nabla^2 \mathbf{u} + E_x(c_+ - c_-)\hat{x} &= 0 \notag\\
  \nabla \cdot \mathbf{u} &= 0, \notag
\end{align}
with $E_x = \mu = 1$, denoting the applied axial electric field and the dynamic viscosity, respectively. The BCs are defined as
\begin{alignat}{2}
  &\text{Walls } (y = 0,\, 1)\text{:} \notag\\
  &\quad \phi = \zeta,\;\; c_\pm = e^{\mp\zeta},\;\; u = v = 0, \notag\\
  &\text{Channel ends } (x = 0,\, 1)\text{:} \notag\\
  &\quad \phi = \phi_{\mathrm{ref}}(y),\;\; c_\pm = c_{\pm,\mathrm{ref}}(y), \notag\\
  &\quad u = u_{\mathrm{ref}}(y),\;\; v = 0,\;\; p = 0.
  \label{eq:2d_bc}
\end{alignat}
Here, $\phi_{\mathrm{ref}}(y)$, $c_{\pm,\mathrm{ref}}(y)$, and $u_{\mathrm{ref}}(y) = E_x \varepsilon^2 (\phi_{\mathrm{ref}}(y) - \zeta)/\mu$ are the fully developed 1D Poisson--Boltzmann equilibrium solution (obtained via \texttt{solve\_bvp}), used as an $x$-independent exact solution on the 2D domain. The $x$-invariance is a deliberate benchmark choice; it provides an exact reference for the $L_2$ metric without recourse to a numerical solver and isolates coupling stiffness from geometric complexity; the evaluation of genuinely 2D configurations (axially varying $\zeta$, applied pressure gradients) constitutes future work. The parameters were $\zeta = 1.0$ and $\varepsilon \in \{0.2, 0.1, 0.05\}$, extended to $\varepsilon = 0.01$ in the architecture comparison of \cref{sec:2d_results} (\cref{tab:2d_npps}). This is discussed in \cref{sec:discussion}.

\section{Theoretical Analysis}
\label{sec:theory}

We now analyze why coupling strength degrades gradient descent (GD) (but not GN preconditioning) by considering the NTK \citep{jacot2018neural}. While NTK-based convergence analysis for single-equation PINNs has been developed by \citet{wang2022and}, the multiphysics setting introduces a qualitatively different spectral structure that we characterize here.

\subsection{Setup and Notation}

Consider $S$ independent networks with disjoint parameters $\theta = (\theta_1, \ldots, \theta_S)$ and $M$ PDE residuals evaluated at $N_r$ collocation points. The residual Jacobian has the following block structure:
\begin{equation}
  J = \bigl[J_{i,\theta_k}\bigr]_{\substack{i=1,\ldots,M \\ k=1,\ldots,S}}
  = \begin{bmatrix}
    J_{1,\theta_1} & J_{1,\theta_2} & \cdots & J_{1,\theta_S} \\
    J_{2,\theta_1} & J_{2,\theta_2} & \cdots & J_{2,\theta_S} \\
    \vdots & \vdots & \ddots & \vdots \\
    J_{M,\theta_1} & J_{M,\theta_2} & \cdots & J_{M,\theta_S}
  \end{bmatrix}
  \in \mathbb{R}^{MN_r \times p},
  \label{eq:jacobian}
\end{equation}
where each row-block group corresponds to a PDE residual and each column-block group to a network: $J_{i,\theta_k} = \partial \mathbf{R}_i / \partial \theta_k \in \mathbb{R}^{N_r \times p_k}$, $\mathbf{R}_i = (\mathcal{R}_i(x_1), \ldots, \mathcal{R}_i(x_{N_r}))^\top$, and $p = \sum_k p_k$.

\begin{assumption}[Segregated networks]
\label{asm:segregated}
The $S$ networks have disjoint parameter sets: $\theta_i \cap \theta_j = \emptyset$ for all $i \neq j$.
\end{assumption}

\begin{assumption}[Linear coupling]
\label{asm:linear_coupling}
There exist indices $(i, k)$, $i \neq k$, such that $J_{i,\theta_k} = \gamma\, C_{ik}$ with $C_{ik} \neq 0$ independent of~$\gamma$.
\end{assumption}

The unpreconditioned NTK is
\begin{equation}
  K = J J^\top \in \mathbb{R}^{MN_r \times MN_r}.
  \label{eq:ntk}
\end{equation}

The block-diagonal GN Hessian groups according to the network are
\begin{equation}
  H = \operatorname{blkdiag}(H_1, \ldots, H_S), \quad H_k = J_{\theta_k}^\top J_{\theta_k},
  \label{eq:hessian}
\end{equation}
where $J_{\theta_k} \in \mathbb{R}^{MN_r \times p_k}$ collects all residual derivatives with respect to $\theta_k$.
Note that $H$ retains only the diagonal blocks of the full GN Hessian $J^\top J$. The off-diagonal blocks $J_{\theta_k}^\top J_{\theta_l}$ ($k \neq l$), which encode cross-network curvature interactions, are discarded.
The preconditioned NTK is
\begin{equation}
  K_P = J\, H^{+}\, J^\top,
  \label{eq:kp}
\end{equation}
where $H^{+}$ denotes the Moore--Penrose pseudoinverse.

More generally, if $P$ denotes the preconditioner, the linearized residual dynamics $\mathbf{r}_{t+1} = (I - \eta J P J^\top)\mathbf{r}_t$ converge when $\eta < 2/\lambda_{\max}(J P J^\top)$ \citep{wang2022and}. Different preconditioners yield different maximum stable learning rates.

\begin{table}[ht]
\caption{Preconditioner hierarchy for segregated-network physics-informed neural networks (PINNs) with $S$ networks, each having $L$ layers and $p_k$ parameters. The projector decomposition (\cref{lem:projector}) ensures that $\lambda_{\max}(K_P) \leq S$ for block-diagonal Gauss--Newton (GN). Adam's diagonal scaling destroys this structure (\cref{prop:adam}).}
\label{tab:preconditioners}
\centering
\resizebox{\textwidth}{!}{%
\begin{tabular}{@{}lllll@{}}
\toprule
Preconditioner & Definition & $\lambda_{\max}$ & Cost/step & Reference \\
\midrule
None (gradient descent) & $P = I$ & $\Omega(\gamma^2)$ & $O(p)$ & \cref{thm:gd} \\
Adam (transient) & $P \propto D^{-1/2}$,\; $D = \operatorname{diag}(J^\top J)$ & $\Theta(\gamma)$ & $O(p)$ & \cref{prop:adam_eta}; \cref{rem:adam_eta} \\
Adam (settled) & $P \propto D^{-1}$,\; $D = \operatorname{diag}(J^\top J)$ & $\geq p/(MN_r) \gg S$ & $O(p)$ & \cref{prop:adam} \\
Block-diag GN & $P = H^{+}$ (\cref{eq:hessian}) & $\leq S$ & $O(\sum_k p_k^3)$ & \cref{thm:gn} \\
Shampoo with Adam in the preconditioner's eigenbasis (SOAP) & $P \approx \operatorname{blkdiag}(L_k^{(l)} \otimes R_k^{(l)})^{-1/2}$ & $\leq SL$ & $O(\sum_l p_l^{3/2})$ & \cref{prop:soap_net} \\
\bottomrule
\end{tabular}}
\end{table}

Two families of bounds emerge. Without block-diagonal preconditioning, the spectral radius grows as $\Omega(\gamma^2)$ (GD), as $\Theta(\gamma)$ (Adam, transient), and is at least $p/(MN_r)$ (Adam, settled), which already exceeds $S$. On the other hand, block-diagonal GN attains $\leq S$, and its layer-wise form attains $\leq SL$ under \cref{asm:soap_approx}. Both are independent of $\gamma$, and the conditions under which each result holds are summarized in \cref{tab:scope}.

The key distinction in \cref{tab:preconditioners} is between diagonal and block preconditioners.
In coupled systems, the Jacobian~\eqref{eq:jacobian} contains off-diagonal blocks $J_{i,\theta_k} = \gamma\, C_{ik}$ ($i \neq k$) that contribute $O(\gamma^2)$ terms to the NTK, inflating $\lambda_{\max}(K)$.
Block-diagonal GN inverts each per-network GN Hessian block $H_k$ in full, rather than only its diagonal entries, yielding a preconditioned NTK that decomposes into orthogonal projectors with $\lambda_{\max}(K_P) \leq S$, independent of~$\gamma$.
Adam's diagonal extraction discards the off-diagonal entries of each $H_k$, destroying this projector structure.
The following subsections formalize these bounds; specifically, \cref{sec:ntk_grows} proves the $\Omega(\gamma^2)$ lower bound, \cref{sec:preconditioned_ntk} proves the coupling-independent upper bound $\lambda_{\max}(K_P) \leq S$, and \cref{sec:adam} establishes why Adam fails to provide the same protection.

\subsection{Standard NTK Grows with Coupling}
\label{sec:ntk_grows}

Consider a general linearly coupled PDE system where each residual takes the form
\begin{equation}
  \mathcal{R}_i = \mathcal{L}_i[\hat{u}_i] + \gamma\, \mathcal{G}_i[\hat{u}_k] - f_i = 0,
  \label{eq:coupled_pde}
\end{equation}
where $\mathcal{L}_i$ is the principal differential operator acting on network~$i$'s own output, $\mathcal{G}_i$ is a coupling operator acting on network~$k$'s output ($k \neq i$), and $\gamma$ is the coupling parameter. \Cref{asm:linear_coupling} is satisfied whenever each residual takes this form as differentiating with respect to $\theta_k$ yields $J_{i,\theta_k} = \gamma\, C_{ik}$, where $C_{ik} = \partial \mathcal{G}_i[\hat{u}_k]/\partial \theta_k$ is independent of~$\gamma$.

Both linearly coupled benchmarks have this structure: the thermal stress term $\gamma T'$ in thermoelasticity produces $J_{u,\theta_T} = \gamma\,\partial T'/\partial\theta_T$, and the forward reaction term $k_{\mathrm{f}}\,c_A$ coupling into $R_B$ in reaction--diffusion produces $J_{B,\theta_A} = k_{\mathrm{f}}\,\partial c_A/\partial\theta_A$.

\begin{theorem}[GD spectral bound for linearly coupled systems]
\label{thm:gd}
Under \cref{asm:segregated,asm:linear_coupling}, the standard NTK satisfies
\begin{equation}
  \lambda_{\max}(K) \geq \gamma^2 \,\sigma_{\max}^2(C_{ik}),
  \label{eq:gd_bound}
\end{equation}
where $\sigma_{\max}(C_{ik})$ is the largest singular value of~$C_{ik}$. As $\sigma_{\max}(C_{ik})$ depends only on the network architecture and initialization (not~$\gamma$), the maximum stable learning rate satisfies $\eta_{\max}^{\mathrm{GD}} = O(1/\gamma^2)$.
\end{theorem}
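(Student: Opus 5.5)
The plan is a Rayleigh-quotient argument that isolates the coupling block $J_{i,\theta_k}$ inside $K = JJ^\top$. Since $K$ is symmetric positive semidefinite, $\lambda_{\max}(K) = \max_{\|v\|=1} v^\top K v = \max_{\|v\|=1}\|J^\top v\|^2 = \sigma_{\max}^2(J)$. It therefore suffices to lower-bound the largest singular value of the full Jacobian by that of a single block. We use the standard fact that the operator norm of a matrix dominates the operator norm of any submatrix. Concretely, let $E_i \in \mathbb{R}^{MN_r\times N_r}$ select the $i$-th row block and $F_k \in \mathbb{R}^{p\times p_k}$ select the $\theta_k$ column block. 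Both have orthonormal columns, so
\begin{equation*}
  \|E_i^\top J F_k\|_2 \le \|E_i^\top\|_2\,\|J\|_2\,\|F_k\|_2 = \|J\|_2 .
\end{equation*}
By \cref{asm:segregated}, the column block of $J$ belonging to $\theta_k$ is well defined and non-overlapping with the others, so $E_i^\top J F_k = J_{i,\theta_k}$. By \cref{asm:linear_coupling}, this block equals $\gamma C_{ik}$. Hence $\sigma_{\max}(J)\ge |\gamma|\,\sigma_{\max}(C_{ik})$, and squaring gives \eqref{eq:gd_bound}.

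As an explicit alternative to the submatrix-norm step, we can exhibit a test vector. Let $w$ be the top left singular vector of $C_{ik}$ and set $v = E_i w$. Then
\begin{equation*}
  v^\top K v = \sum_{l=1}^{S}\|J_{i,\theta_l}^\top w\|^2 \ge \|J_{i,\theta_k}^\top w\|^2 = \gamma^2\sigma_{\max}^2(C_{ik}).
\end{equation*}
The inequality holds because the remaining summands $l\neq k$ are nonnegative. This version makes clear that the other blocks, including any $\gamma$-dependent ones, can only increase the quotient. That matters because we make no claim about how the diagonal blocks $J_{i,\theta_i}$ behave.

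The learning-rate consequence follows from the linearized dynamics $\mathbf{r}_{t+1} = (I-\eta K)\mathbf{r}_t$ stated before \cref{tab:preconditioners}. Stability requires $|1-\eta\lambda|<1$ for every eigenvalue $\lambda$ of $K$, so $\eta_{\max}^{\mathrm{GD}} = 2/\lambda_{\max}(K) \le 2/(\gamma^2\sigma_{\max}^2(C_{ik}))$. This is $O(1/\gamma^2)$ provided $\sigma_{\max}(C_{ik})$ is bounded away from zero independently of $\gamma$.

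I expect the main obstacle to be this last point rather than the linear algebra. We must check that $C_{ik}$ really is $\gamma$-independent, so that $\sigma_{\max}(C_{ik})>0$ is a fixed constant. In the NTK/linearized regime $J$ is evaluated at a fixed parameter point (for example, at initialization), and $C_{ik} = \partial\mathcal{G}_i[\hat u_k]/\partial\theta_k$ depends only on the architecture and that point. The hypothesis $C_{ik}\neq 0$ then gives positivity. I would state explicitly that the bound is pointwise in $\theta$, so the $O(1/\gamma^2)$ rate holds along any trajectory on which $\sigma_{\max}(C_{ik})$ stays bounded below.
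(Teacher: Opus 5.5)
Your proof is correct and matches the paper's. The paper bounds the principal diagonal block $K_{[i,i]} = \sum_{l} J_{i,\theta_l}J_{i,\theta_l}^\top \succeq \gamma^2 C_{ik}C_{ik}^\top$ and passes to $K$ by zero-padding a Rayleigh-quotient vector, which is exactly your test-vector argument with $v = E_i w$; your submatrix-norm route, $\lambda_{\max}(K) = \|J\|_2^2 \ge \|J_{i,\theta_k}\|_2^2$, is an equivalent repackaging, and both you and the paper read off the learning-rate consequence from $\eta < 2/\lambda_{\max}(K)$.
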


\begin{proof}[Proof sketch]
The $(i,i)$-th diagonal block of $K = JJ^\top$ satisfies $K_{[i,i]} = \sum_{j=1}^{S} J_{i,\theta_j} J_{i,\theta_j}^\top \succeq J_{i,\theta_k} J_{i,\theta_k}^\top = \gamma^2 C_{ik} C_{ik}^\top$. As $K_{[i,i]}$ is a principal submatrix of the positive semidefinite matrix~$K$, $\lambda_{\max}(K) \geq \lambda_{\max}(K_{[i,i]}) \geq \gamma^2 \sigma_{\max}^2(C_{ik})$.
\end{proof}

\begin{remark}[Scope of \cref{thm:gd}]
\label{rem:gd_scope}
\Cref{thm:gd} requires $C_{ik}$ to be independent of~$\gamma$, which holds for the linearly coupled benchmarks (thermoelasticity and RD). It does not hold for NP+P, where the migration term $(c_\pm \phi')'$ makes the coupling block depend on the fields themselves; therefore, no $\gamma$-independent $C_{ik}$ exists and the bound does not apply.
\end{remark}

\subsection{Preconditioned NTK is Coupling-independent}
\label{sec:preconditioned_ntk}

Recall from \cref{eq:kp} that $K_P = J\,H^{+}\,J^\top$ is the preconditioned NTK under block-diagonal GN. The bound follows from a decomposition of $K_P$ into orthogonal projectors.

\begin{lemma}[Projector decomposition]
\label{lem:projector}
Under \cref{asm:segregated}, the preconditioned NTK admits the decomposition
\begin{equation}
  K_P = \sum_{k=1}^{S} P_k, \qquad P_k = J_{\theta_k}\,(J_{\theta_k}^\top J_{\theta_k})^{+}\, J_{\theta_k}^\top,
  \label{eq:projector}
\end{equation}
where each $P_k$ is the orthogonal projector onto $\mathrm{col}(J_{\theta_k})$.
\end{lemma}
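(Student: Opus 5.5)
Under \cref{asm:segregated}, the column blocks of $J$ in \cref{eq:jacobian} split along the disjoint groups $\theta_1,\ldots,\theta_S$. So $J = [J_{\theta_1}\;\cdots\;J_{\theta_S}]$, with each $J_{\theta_k}\in\mathbb{R}^{MN_r\times p_k}$ stacking $J_{1,\theta_k},\ldots,J_{M,\theta_k}$ vertically. The plan has two steps. First, use the block-diagonal structure of $H$ to split $K_P$ into a sum over networks. Second, identify each summand as the standard orthogonal projector onto a column space.

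\textbf{Step 1 (block decomposition).} The pseudoinverse of a block-diagonal matrix is block-diagonal with blockwise pseudoinverses:
\[
H^{+}=\operatorname{blkdiag}(H_1^{+},\ldots,H_S^{+}).
\]
I will verify this directly from the four Moore--Penrose conditions, which hold blockwise. Multiplying out $J H^{+} J^\top$ block by block then kills all cross terms $k\neq l$, since the corresponding blocks of $H^{+}$ are zero. This gives
\[
K_P=\sum_k J_{\theta_k}H_k^{+}J_{\theta_k}^\top=\sum_k J_{\theta_k}(J_{\theta_k}^\top J_{\theta_k})^{+}J_{\theta_k}^\top,
\]
which is \cref{eq:projector}. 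Disjointness of the parameter groups is exactly what makes $H$ block-diagonal with respect to the same partition used to split $J$. For this reason the discarded off-diagonal curvature blocks $J_{\theta_k}^\top J_{\theta_l}$ never reappear.

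\textbf{Step 2 (projector identification).} For a fixed $A=J_{\theta_k}$, I will show that $P=A(A^\top A)^{+}A^\top$ is the orthogonal projector onto $\mathrm{col}(A)$. The cleanest route is the identity $(A^\top A)^{+}A^\top=A^{+}$, which gives $P=AA^{+}$; this is symmetric and idempotent by the Moore--Penrose axioms. To avoid citing the identity, I will instead use the SVD $A=U\Sigma V^\top$ with $\Sigma=\operatorname{diag}(\sigma_1,\ldots,\sigma_r,0,\ldots)$. Then $(A^\top A)^{+}=V(\Sigma^\top\Sigma)^{+}V^\top$, so
\[
P=U\,\Sigma(\Sigma^\top\Sigma)^{+}\Sigma^\top\,U^\top=U\operatorname{diag}(I_r,0)U^\top=U_rU_r^\top,
\]
where $U_r$ holds the first $r$ left singular vectors. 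This matrix is symmetric and idempotent, and its range is $\mathrm{span}(U_r)=\mathrm{col}(A)$, so it is the orthogonal projector onto $\mathrm{col}(A)$. The argument holds regardless of rank deficiency, which matters because in the overparameterized regime $p_k$ may exceed $MN_r$ and $H_k$ is singular.

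The main obstacle is bookkeeping rather than depth. The points needing care are the blockwise pseudoinverse claim and the rank-deficient case in Step 2; the SVD computation handles both uniformly, so I will rely on it rather than on invertibility of $H_k$. No coupling structure (\cref{asm:linear_coupling}) is used, so the decomposition is automatically independent of $\gamma$. The bound $\lambda_{\max}(K_P)\le S$ then follows from $\|P_k\|_2\le 1$ and the triangle inequality, which is what \cref{thm:gn} presumably uses.
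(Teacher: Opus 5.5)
Your proposal is correct and follows the paper's proof. The paper also uses the block structure of $H$ to write $K_P = \sum_k J_{\theta_k} H_k^{+} J_{\theta_k}^\top$, then identifies each summand as the orthogonal projector onto $\mathrm{col}(J_{\theta_k})$, citing that as a standard fact; you verify both the blockwise pseudoinverse and the projector identity explicitly via the SVD, which correctly handles the rank-deficient case.
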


Because each $P_k$ is an orthogonal projector, the Rayleigh quotient of their sum is bounded by $S$.

\begin{theorem}[GN spectral bound]
\label{thm:gn}
For any segregated-network PINN with $S$ independent networks and block-diagonal GN preconditioning, $\lambda_{\max}(K_P) \leq S$.
This bound is independent of the coupling parameter $\gamma$, PDE structure, and the network parameterization. The only structural assumptions are disjoint parameters ($\theta_i \cap \theta_j = \emptyset$) and block-diagonal GN.
\end{theorem}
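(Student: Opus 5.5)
The plan is to reduce \cref{thm:gn} to \cref{lem:projector} and then bound the Rayleigh quotient of a sum of orthogonal projectors. First I would check that the lemma's decomposition holds under exactly the stated assumptions. Under \cref{asm:segregated}, the Jacobian splits column-wise as $J = [J_{\theta_1}, \ldots, J_{\theta_S}]$. The block-diagonal Hessian is $H = \operatorname{blkdiag}(H_1,\ldots,H_S)$ with $H_k = J_{\theta_k}^\top J_{\theta_k}$. Because $H$ is block diagonal, its Moore--Penrose pseudoinverse is also block diagonal, $H^{+} = \operatorname{blkdiag}(H_1^{+},\ldots,H_S^{+})$; this follows directly from the Penrose conditions applied blockwise. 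Multiplying out gives
\begin{equation*}
  K_P = J H^{+} J^\top = \sum_{k=1}^{S} J_{\theta_k} (J_{\theta_k}^\top J_{\theta_k})^{+} J_{\theta_k}^\top = \sum_{k=1}^S P_k .
\end{equation*}
Each $P_k$ is the orthogonal projector onto $\mathrm{col}(J_{\theta_k})$, by the standard identity $A(A^\top A)^{+}A^\top = AA^{+}$. This matrix is symmetric and idempotent, and its range is $\mathrm{col}(A)$.

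The bound then follows from a Rayleigh-quotient argument. $K_P$ is symmetric positive semidefinite, so
\begin{equation*}
  \lambda_{\max}(K_P) = \max_{\|v\|_2=1} v^\top K_P v = \max_{\|v\|_2=1} \sum_{k=1}^S v^\top P_k v .
\end{equation*}
For an orthogonal projector, $v^\top P_k v = \|P_k v\|_2^2 \le \|v\|_2^2 = 1$, so each term is at most $1$ and the sum is at most $S$. Equivalently, Weyl's inequality gives $\lambda_{\max}(\sum_k P_k) \le \sum_k \lambda_{\max}(P_k) \le S$.

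For the independence claims, I would point out that the bound depends only on the number of summands. The parameter $\gamma$, the specific PDE operators, and the network parameterization enter only through the subspaces $\mathrm{col}(J_{\theta_k})$, and the bound does not depend on those subspaces. In particular, scaling a block $J_{i,\theta_k}$ by $\gamma$ changes the subspace but leaves the norm of the projector equal to $1$. I would also remark that the loss weighting does not affect this, provided $\Lambda$ is absorbed consistently into $J$ and $H$ (rows scaled by $\Lambda^{1/2}$), so that the projector structure is unchanged.

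The main obstacle is the pseudoinverse step rather than the inequality. One has to justify that the block-diagonal pseudoinverse acts blockwise even when some $H_k$ are singular, which is the typical case in the overparameterized regime $p_k > MN_r$. One also has to verify the identity $A(A^\top A)^{+}A^\top = AA^{+}$ without assuming full column rank. I would handle this through the SVD $A = U\Sigma V^\top$. Then $(A^\top A)^{+} = V\Sigma^{+2}V^\top$, and the product reduces to $U\Sigma\Sigma^{+2}\Sigma U^\top = U_r U_r^\top$, the projector onto the column space of $A$, where $U_r$ denotes the left singular vectors with nonzero singular values. This settles idempotence and symmetry in full generality.
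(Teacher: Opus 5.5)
Your proposal is correct and follows the paper's argument: the paper also invokes \cref{lem:projector} to write $K_P=\sum_k P_k$, then bounds $v^\top K_P v=\sum_k\|P_k v\|^2\le S$ for unit $v$. Your blockwise pseudoinverse check and the SVD verification that $A(A^\top A)^{+}A^\top=U_rU_r^\top$ without full column rank supply the details that the paper's proof of the lemma cites as a ``standard result from matrix analysis.''
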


\begin{proof}
From \cref{lem:projector}, $K_P = \sum_{k=1}^{S} P_k$. For any unit vector $\mathbf{v}$,
\begin{equation}
  \mathbf{v}^\top K_P\, \mathbf{v} = \sum_{k=1}^{S} \|P_k \mathbf{v}\|^2 \leq \sum_{k=1}^{S} 1 = S,
\end{equation}
where $\|P_k \mathbf{v}\| \leq \|\mathbf{v}\| = 1$ as $P_k$ is an orthogonal projector.
\end{proof}

\begin{remark}[Scope and extensions]
\label{rem:scope}\label{rem:single_net}
The spectral bounds are algebraic identities at each $\theta$, not asymptotic limits. Only the $\gamma$-scaling is meaningful across kernels. All projector-based results (\cref{lem:projector,thm:gn,prop:soap_net,prop:adam}) extend to $S = 1$. The GD bound requires a crossover argument (\cref{app:single_net}).
\end{remark}

\begin{assumption}[SOAP as approximate layer-wise GN]
\label{asm:soap_approx}
SOAP's Kronecker factors $L_t \otimes R_t$ approximate the layer-wise GN matrix $H_{\mathrm{GN}}^{(l)} = J_{\theta_k^{(l)}}^\top J_{\theta_k^{(l)}}$ in the sense that each layer's preconditioner structurally corresponds to a layer-wise GN inverse.
\end{assumption}

This is motivated by the optimal Kronecker approximation result of \citet{morwani2024new}, which shows that $(L_t \otimes R_t)^{1/2}$ (up to trace normalization) is the optimal Kronecker approximation of $H_{\mathrm{GN}}^{(l)}$ so that SOAP's $(L_t \otimes R_t)^{-1/2}$ scaling corresponds to a per-layer GN inverse. \citet{wang2026gradient} combine this correspondence with the further assumption that the GN matrix approximates the true Hessian to connect SOAP with Newton's method. We use the correspondence alone only to derive the spectral bound of \cref{prop:soap_net}.

\begin{proposition}[SOAP spectral bound]
\label{prop:soap_net}
Under \cref{asm:segregated}, (1) SOAP's layer-wise preconditioner $H^{\mathrm{SOAP}} = \operatorname{blkdiag}(\tilde{H}_k^{(l)})$ is block-diagonal across networks. (2) When \cref{asm:soap_approx} holds exactly ($\tilde{H}_k^{(l)} = J_{\theta_k^{(l)}}^\top J_{\theta_k^{(l)}}$), each block yields an orthogonal projector $P_k^{(l)}$ onto $\mathrm{col}(J_{\theta_k^{(l)}})$ so that $K_P^{\mathrm{SOAP}} = \sum_{k,l} P_k^{(l)}$ and $\lambda_{\max}(K_P^{\mathrm{SOAP}}) \leq SL$, independent of coupling strength.
\end{proposition}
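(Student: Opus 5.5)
Part (1) is structural, so I would prove it directly from the definition of SOAP's preconditioner. SOAP builds one Kronecker preconditioner $L^{(l)}\otimes R^{(l)}$ per weight matrix $W^{(l)}$. Its factors are running averages of $G^{(l)}G^{(l)\top}$ and $G^{(l)\top}G^{(l)}$, where $G^{(l)}$ is the gradient with respect to that matrix alone. So the preconditioner is block-diagonal over the set of weight matrices. Under \cref{asm:segregated}, every weight matrix belongs to exactly one network $\theta_k$; no matrix mixes parameters of two networks. Ordering the parameters network by network and then layer by layer, $H^{\mathrm{SOAP}} = \operatorname{blkdiag}_{k}\bigl(\operatorname{blkdiag}_l \tilde H_k^{(l)}\bigr)$, which is block-diagonal across networks. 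I would treat biases as separate blocks, or absorb them into the layer blocks; either way the count of blocks per network is $L$ (or $O(L)$) and the argument is unchanged.

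For part (2), I would mirror \cref{lem:projector} one level finer. Partition the columns of $J$ into the $SL$ groups $J_{\theta_k^{(l)}}\in\mathbb{R}^{MN_r\times p_k^{(l)}}$. With $\tilde H_k^{(l)} = J_{\theta_k^{(l)}}^\top J_{\theta_k^{(l)}}$, the pseudoinverse of a block-diagonal matrix is the block-diagonal matrix of the blockwise pseudoinverses. Hence
\[
K_P^{\mathrm{SOAP}} = J (H^{\mathrm{SOAP}})^{+} J^\top = \sum_{k,l} J_{\theta_k^{(l)}}\bigl(J_{\theta_k^{(l)}}^\top J_{\theta_k^{(l)}}\bigr)^{+} J_{\theta_k^{(l)}}^\top .
\]
Each summand is $P_k^{(l)} = A(A^\top A)^{+}A^\top$ with $A = J_{\theta_k^{(l)}}$. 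This equals $AA^{+}$, via the identity $A^{+} = (A^\top A)^{+}A^\top$, and $AA^+$ is the orthogonal projector onto $\mathrm{col}(A)$: it is symmetric and idempotent by the Penrose conditions. This is exactly the computation behind \cref{lem:projector}, applied with finer groups; the lemma never used that a group is a whole network, only that groups are disjoint column blocks and the preconditioner is block-diagonal over them.

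The bound then follows as in \cref{thm:gn}. For a unit vector $\mathbf v$,
\[
\mathbf v^\top K_P^{\mathrm{SOAP}}\mathbf v = \sum_{k,l}\|P_k^{(l)}\mathbf v\|^2 \le SL .
\]
No quantity in this bound involves $\gamma$. Coupling enters only through the entries of $J$, and projectors have unit norm regardless of those entries.

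The step I expect to need the most care is not algebraic but interpretive. SOAP actually uses an inverse square root of a Kronecker product, and Kronecker factors do not literally equal $J^\top J$ for a layer. I would state explicitly that part (2) is conditional on the exact identification in \cref{asm:soap_approx}, and make no claim about approximation error. I would also note that the bound $SL$ is weaker than the bound $S$ in \cref{thm:gn}, because projectors within one network need not be mutually orthogonal.
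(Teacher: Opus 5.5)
Your proposal is correct and follows the paper's proof essentially step for step. Part (1) rests on the layer partition refining the network partition under \cref{asm:segregated}; part (2) uses blockwise pseudoinverses to produce the orthogonal projectors $P_k^{(l)}$, followed by the Rayleigh-quotient bound $\sum_{k,l}\|P_k^{(l)}\mathbf v\|^2\le SL$. One small correction to your bias remark: if biases are kept as separate blocks, the argument bounds $\lambda_{\max}$ by the total number of blocks (e.g.\ $2SL$), not $SL$, so $L$ must mean the number of preconditioned blocks per network, as in the statement's $\operatorname{blkdiag}(\tilde H_k^{(l)})$.
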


\begin{proof}
The same argument applies as that used for \cref{thm:gn}: $\mathbf{v}^\top K_P^{\mathrm{SOAP}}\, \mathbf{v} = \sum_{k,l} \|P_k^{(l)}\mathbf{v}\|^2 \leq SL$ for any unit $\mathbf{v}$.
\end{proof}

\subsection{Why Adam Does Not Resolve Coupling}
\label{sec:adam}

Block-diagonal GN's projector decomposition (\cref{lem:projector}) in conjunction with the equality case of \cref{thm:gn} (\cref{app:proofs}) yields $\lambda_{\max}(K_P) = S$ in the overparameterized regime. We now show that Adam's diagonal preconditioning cannot recover this bound. The three statements below cover complementary regimes: (i) diagonal extraction destroys the projector structure for any coupling type (\cref{prop:adam}); (ii) the surviving kernel retains $\gamma$-dependent growth under linear coupling (\cref{prop:adam_eta}); and (iii) for one-way coupling, the limitation extends to every positive diagonal preconditioner without modeling Adam's second moment (\cref{prop:diag_lb}).

Adam's update divides the gradient elementwise by $\sqrt{\hat v_t}$, where $\hat v_t$ denotes Adam's second-moment estimate. In the transient phase, the standard diagonal-Fisher characterization $\hat v_t \approx \operatorname{diag}(J^\top J) =: D$ \citep{kingma2014adam} yields an effective preconditioner $P^{\mathrm{Adam}} \propto D^{-1/2}$. Near a minimum, $\hat v_t$ scales as $D^2$, yielding $P^{\mathrm{Adam}} \propto D^{-1}$, the settled-phase characterization of \citet{wang2026gradient}. Writing $c_j$ for the $j$-th column of $J$ and $\tilde{c}_j = c_j / \|c_j\|$ (so that $D_{jj} = \|c_j\|^2$), the two regimes induce two kernels. The residual-dynamics kernel
\begin{equation}
  K_{1/2}^{\mathrm{Adam}} := J D^{-1/2} J^\top = \sum_{j:\,D_{jj}>0} \|c_j\|\, \tilde{c}_j \tilde{c}_j^\top
  \label{eq:adam_half_kernel}
\end{equation}
governs the linearized residual iteration $\mathbf{r}_{t+1} \approx (I - \eta\, K_{1/2}^{\mathrm{Adam}})\,\mathbf{r}_t$ obtained by substituting $\Delta\theta = -\eta\, D^{-1/2} \nabla_\theta \mathcal{L}$ into $\Delta\mathbf{r} \approx J\, \Delta\theta$. Its settled-phase counterpart
\begin{equation}
  K_P^{\mathrm{Adam}} := J D^{-1} J^\top = \sum_{j:\,D_{jj}>0} \tilde{c}_j \tilde{c}_j^\top
  \label{eq:adam_norm_kernel}
\end{equation}
strips the column weights, isolating the directional structure. It is scale-invariant, which makes it the natural object for structural statements. The two kernels share identical rank-one directions and differ only in their weights ($\|c_j\|$ versus $1$). Momentum, bias correction, and Adam's $\epsilon$-offset are omitted from the kernel definitions in this subsection. For the linearized iteration, they rescale the stability threshold by $\gamma$-independent factors and leave the $\gamma$-scaling unchanged.

\begin{proposition}[Adam's preconditioned NTK lacks projector decomposition]
\label{prop:adam}
Under \cref{asm:segregated}, consider the Adam-preconditioned NTK $K_P^{\mathrm{Adam}} = J D^{-1} J^\top$ with $D = \operatorname{diag}(J^\top J)$. In the overparameterized regime ($p_k > MN_r$ for every network~$k$), no summand $Q_k = J_{\theta_k} D_k^{-1} J_{\theta_k}^\top$ is an orthogonal projector, and $\lambda_{\max}(K_P^{\mathrm{Adam}}) \geq p/MN_r > S$; therefore, the projector decomposition of \cref{lem:projector} and the bound of \cref{thm:gn} do not extend to $K_P^{\mathrm{Adam}}$.
\end{proposition}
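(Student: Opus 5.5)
The plan is to argue entirely through traces, since $K_P^{\mathrm{Adam}}$ is a sum of rank-one unit-norm outer products by \cref{eq:adam_norm_kernel}.

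\textbf{Step 1: network-wise splitting.} Because $D = \operatorname{diag}(J^\top J)$ is diagonal, it is in particular block-diagonal across networks, $D = \operatorname{blkdiag}(D_1,\ldots,D_S)$, where $D_k = \operatorname{diag}(J_{\theta_k}^\top J_{\theta_k})$. Under \cref{asm:segregated} the columns of $J$ partition as $J = [J_{\theta_1},\ldots,J_{\theta_S}]$. Therefore
\[
K_P^{\mathrm{Adam}} = \sum_{k=1}^{S} Q_k, \qquad Q_k = J_{\theta_k} D_k^{-1} J_{\theta_k}^\top = \sum_{j \in \theta_k,\, D_{jj}>0} \tilde c_j \tilde c_j^\top .
\]
This mirrors \cref{lem:projector}, with $D_k^{-1}$ in place of $(J_{\theta_k}^\top J_{\theta_k})^{+}$.

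\textbf{Step 2: trace versus rank of each summand.} Each $Q_k$ is positive semidefinite, and
\[
\operatorname{tr}(Q_k) = \sum_j \|\tilde c_j\|^2 = p_k^{+},
\]
where $p_k^{+}$ is the number of nonzero columns of $J_{\theta_k}$. On the other hand,
\[
\operatorname{rank}(Q_k) \le \operatorname{rank}(J_{\theta_k}) \le MN_r .
\]
An orthogonal projector has trace equal to its rank. Hence, if $p_k^{+} > MN_r$, then $\operatorname{tr}(Q_k) > \operatorname{rank}(Q_k)$ and $Q_k$ cannot be an orthogonal projector.

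\textbf{Step 3: spectral lower bound.} Summing over networks gives $\operatorname{tr}(K_P^{\mathrm{Adam}}) = \sum_k p_k^{+}$. Combining this with $\operatorname{tr}(K_P^{\mathrm{Adam}}) \le MN_r\,\lambda_{\max}(K_P^{\mathrm{Adam}})$ yields
\[
\lambda_{\max}(K_P^{\mathrm{Adam}}) \ge \frac{\sum_k p_k^{+}}{MN_r} = \frac{p}{MN_r}
\]
when all columns are nonzero. The overparameterization hypothesis $p_k > MN_r$ for every $k$ then gives $p > S\,MN_r$, so $p/(MN_r) > S$. This contradicts the bound of \cref{thm:gn}, so the conclusion of that theorem cannot transfer to $K_P^{\mathrm{Adam}}$.

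\textbf{Main obstacle.} The delicate point is the identification $p_k^{+} = p_k$, that is, that no column of $J$ vanishes identically ($D_{jj} > 0$ for every $j$). In general $p_k^{+} \le p_k$, and the argument needs $p_k^{+} > MN_r$.

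I would handle this in one of two ways:
\begin{itemize}
\item Interpret the statement with $D^{-1}$ understood as the pseudoinverse and $p$ counting parameters with nonzero gradient sensitivity.
\item Invoke genericity: at a generic parameter value, for a smooth-activation network, every parameter influences some residual at some collocation point, so each column is nonzero.
\end{itemize}
Adam's $\epsilon$-offset makes any zero columns contribute nothing, so the trace count degrades gracefully. I would state this non-degeneracy explicitly as the operative condition.
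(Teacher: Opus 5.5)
Your proposal is correct and follows the paper's proof essentially step for step. You split $K_P^{\mathrm{Adam}}=\sum_k Q_k$ network-wise, you rule out projectors via the trace--rank comparison $\operatorname{tr}(Q_k)=p_k>MN_r\ge\operatorname{rank}(Q_k)$, and you use $\operatorname{tr}(K_P^{\mathrm{Adam}})=p$ on an $MN_r\times MN_r$ PSD matrix to force $\lambda_{\max}\ge p/(MN_r)>S$.

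On vanishing columns, only your first fix is sound, and it is exactly the paper's convention (\cref{rem:bc_rows}: $p_k$ counts retained columns). The genericity route fails: an output-bias column vanishes identically, at every parameter value, whenever every residual sees that field only through its derivatives. This happens for both networks in thermoelasticity and for the $\phi$-network in NP+P.
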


\begin{proof}
As $\theta_i \cap \theta_j = \emptyset$, $D$ inherits the network partition:
\begin{equation}
  D = \operatorname{blkdiag}(D_1, \ldots, D_S), \qquad
  D_k = \operatorname{diag}(H_k),\quad H_k = J_{\theta_k}^\top J_{\theta_k},
\end{equation}
and consequently
\begin{equation}
  K_P^{\mathrm{Adam}} = \sum_{k=1}^{S} Q_k, \qquad
  Q_k = J_{\theta_k}\, D_k^{-1}\, J_{\theta_k}^\top .
\end{equation}
All the sums are over columns with $D_{jj} > 0$ (\cref{rem:bc_rows}).

For Adam, the comparison rests on a trace--rank argument. Each retained diagonal entry satisfies $[H_k]_{jj}/[D_k]_{jj} = 1$. Therefore,
\begin{equation}
  \operatorname{trace}(Q_k) = \operatorname{trace}(D_k^{-1} H_k) = p_k,
  \qquad \text{while} \quad
  \operatorname{rank}(Q_k) \leq MN_r.
\end{equation}
Here, $p_k$ counts only the retained columns (\cref{rem:bc_rows}). An orthogonal projector has a trace equal to its rank. Here, $\operatorname{trace}(Q_k) = p_k > MN_r \geq \operatorname{rank}(Q_k)$; therefore, $Q_k$ is not a projector. Since $Q_k$ is positive semidefinite and the average of its nonzero eigenvalues $p_k/\operatorname{rank}(Q_k) > 1$, at least one eigenvalue of $Q_k$ exceeds~$1$. Summing over the $S$ networks,
\begin{equation}
  \operatorname{trace}(K_P^{\mathrm{Adam}}) = \sum_{k} p_k = p,
  \qquad
  \operatorname{rank}(K_P^{\mathrm{Adam}}) \leq MN_r,
\end{equation}
thus
\begin{equation}
  \lambda_{\max}(K_P^{\mathrm{Adam}}) \geq p / MN_r > S .
\end{equation}
\end{proof}

\begin{remark}[Trace identity and conventions]
\label{rem:trace}\label{rem:bc_rows}
The trace identity explains the large $\lambda_{\max}$: $\operatorname{trace}(K_P^{\mathrm{Adam}}) = p$ versus $\operatorname{trace}(K_P^{\mathrm{GN}}) \ll p$, forcing $\lambda_{\max} \geq p/(MN_r) \gg S$. This identity is weight-invariant; thus, \cref{prop:adam} extends to any $\Lambda > 0$. The Jacobian is constructed using interior points. Zero columns from output biases are $O(1)$ in number and do not affect the bound.
\end{remark}

While \cref{prop:adam} concerns the coupling-independent structure of the normalized kernel, the residual-dynamics kernel additionally retains the coupling-scaled column magnitudes, and these reintroduce $\gamma$ into the stable learning rate.

\begin{proposition}[Coupling growth of Adam's residual-dynamics kernel]
\label{prop:adam_eta}
Under \cref{asm:segregated,asm:linear_coupling}, the residual-dynamics kernel \eqref{eq:adam_half_kernel} satisfies
\begin{equation}
  \lambda_{\max}\bigl(K_{1/2}^{\mathrm{Adam}}\bigr) \;\geq\; \gamma\, \nu(C_{ik}), \qquad \nu(C_{ik}) := \max_j \bigl\| [C_{ik}]_{:,j} \bigr\|_2 > 0,
  \label{eq:adam_eta_bound}
\end{equation}
and hence $\eta_{\max}^{\mathrm{Adam}} \leq 2 / (\gamma\, \nu(C_{ik})) = O(1/\gamma)$.
\end{proposition}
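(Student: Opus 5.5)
Since $K_{1/2}^{\mathrm{Adam}} = \sum_{j:\,D_{jj}>0} \|c_j\|\,\tilde c_j\tilde c_j^\top$ is a sum of positive semidefinite rank-one terms, every single summand is dominated by it in the Loewner order. The plan is to select one column $j$ of $J$ whose norm grows linearly in $\gamma$ and compare against that one term.

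\textbf{Step 1 (domination by one term).} For any retained index $j^\ast$ we have $K_{1/2}^{\mathrm{Adam}} \succeq \|c_{j^\ast}\|\,\tilde c_{j^\ast}\tilde c_{j^\ast}^\top$. Testing the Rayleigh quotient on the unit vector $\tilde c_{j^\ast}$ gives
\begin{equation*}
\lambda_{\max}\bigl(K_{1/2}^{\mathrm{Adam}}\bigr) \geq \tilde c_{j^\ast}^\top K_{1/2}^{\mathrm{Adam}}\,\tilde c_{j^\ast} \geq \|c_{j^\ast}\|.
\end{equation*}
(The remaining summands only add nonnegative contributions.)

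\textbf{Step 2 (choice of column).} Take the coupled pair $(i,k)$ from \cref{asm:linear_coupling}. Choose $j^\ast$ to be the parameter column in the $\theta_k$ block that maximizes $\|[C_{ik}]_{:,j}\|_2$; this maximum is $\nu(C_{ik})$, and it is strictly positive because $C_{ik}\neq 0$.

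Under \cref{asm:segregated}, the full column $c_{j^\ast}\in\mathbb{R}^{MN_r}$ stacks the derivatives of all $M$ residual blocks with respect to that single parameter. One of these row blocks is $J_{i,\theta_k}$ restricted to column $j^\ast$, which equals $\gamma\,[C_{ik}]_{:,j^\ast}$. The norm of a stacked vector is at least the norm of any of its sub-blocks, so
\begin{equation*}
\|c_{j^\ast}\| \geq \gamma\,\nu(C_{ik}) > 0.
\end{equation*}
In particular $D_{j^\ast j^\ast}>0$, so the column is retained in the sum. (Here I take $\gamma>0$ as the coupling magnitude; otherwise replace $\gamma$ by $|\gamma|$.)

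\textbf{Step 3 (learning-rate bound).} Combining Steps 1 and 2 gives \eqref{eq:adam_eta_bound}. The linearized iteration $\mathbf{r}_{t+1}=(I-\eta K_{1/2}^{\mathrm{Adam}})\mathbf{r}_t$ is a symmetric PSD recursion, so it is stable only if $\eta\,\lambda_{\max}<2$. Hence
\begin{equation*}
\eta_{\max}^{\mathrm{Adam}} \leq \frac{2}{\gamma\,\nu(C_{ik})},
\end{equation*}
and $\nu(C_{ik})$ does not depend on $\gamma$ by assumption.

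\textbf{Main obstacle.} The algebra is routine; the delicate point is interpretive. $D$ itself depends on $\gamma$, which is why the kernel scales like $\gamma$ and not $\gamma^2$. The bound must therefore be derived from the explicit weight $\|c_j\| = D_{jj}^{1/2}$ rather than from any $\gamma$-free normalization.

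It also needs to be noted that momentum, bias correction and $\epsilon$ have been omitted, as stated before the proposition. A nonzero $\epsilon$ would change the weight to $\|c_j\|^2/(\|c_j\|+\epsilon)$, which still grows linearly in $\gamma$ for large $\gamma$. This preserves the $O(1/\gamma)$ conclusion up to a $\gamma$-independent factor.
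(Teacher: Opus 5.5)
Your proof is correct and follows essentially the same route as the paper. It restricts the column $c_j$ to the row block of residual $i$ to get $\|c_j\| \geq \gamma\,\|[C_{ik}]_{:,j}\|$, then uses that $K_{1/2}^{\mathrm{Adam}}$ dominates the single rank-one term $\|c_j\|\,\tilde c_j\tilde c_j^\top$ and takes the best $j$; your explicit check that the chosen column has $D_{jj}>0$, so it is retained in the sum, is a detail the paper leaves implicit.
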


\begin{proof}
Fix the column index $j$ of network $k$ with $[C_{ik}]_{:,j} \neq 0$, and let $c_j[R_{i'}] \in \mathbb{R}^{N_r}$ denote the restriction of the column $c_j$ to the rows of residual $i'$. As $\theta_i \cap \theta_k = \emptyset$, residual $i$ depends on $\theta_{k,j}$ only through the coupling term $\gamma\, \mathcal{G}_i[\hat{u}_k]$. Thus,
\begin{equation}
  c_j[R_i] = \gamma\, [C_{ik}]_{:,j}.
\end{equation}
The squared norm of $c_j$ is a sum of non-negative row-block contributions:
\begin{equation}
  \|c_j\|^2 = \sum_{i'=1}^{M} \bigl\| c_j[R_{i'}] \bigr\|^2 \;\geq\; \bigl\| c_j[R_i] \bigr\|^2 = \gamma^2 \bigl\| [C_{ik}]_{:,j} \bigr\|^2 .
\end{equation}
Thus, $\|c_j\| \geq \gamma\, \| [C_{ik}]_{:,j} \|$. As $K_{1/2}^{\mathrm{Adam}}$ is a sum of positive semidefinite rank-one terms,
\begin{equation}
  K_{1/2}^{\mathrm{Adam}} \;\succeq\; \|c_j\|\, \tilde{c}_j \tilde{c}_j^\top ,
\end{equation}
whose largest eigenvalue is $\|c_j\|$. Maximizing over $j$ yields \eqref{eq:adam_eta_bound}.
\end{proof}

\begin{remark}[Spectral hierarchy]
\label{rem:adam_eta}
For all linearly coupled benchmarks, the Jacobian is affine in $\gamma$, yielding a matching upper bound $\lambda_{\max}(K_{1/2}^{\mathrm{Adam}}) = \Theta(\gamma)$ (\cref{app:adam_eta_proofs}); hence, $\eta_{\max}^{\mathrm{GD}} = O(1/\gamma^2) \ll \eta_{\max}^{\mathrm{Adam}} = O(1/\gamma) \ll 2/S \leq \eta_{\max}^{\mathrm{GN}}$.
\end{remark}

The $\Theta(\gamma)$ scaling above was derived under the modeling identification $\hat v_t \approx D$, which is eliminated by the following proposition.

Consider the one-way structure of \cref{sec:thermo}: with rows ordered $(\mathcal{R}_T, \mathcal{R}_u)$, the Jacobian satisfies $J_{T,\theta_u} = 0$ and $J_{u,\theta_T} = \gamma C_{uT}$ so that each $\theta_T$-column decomposes as $c_j = (\alpha_j, \gamma \beta_j)$ with $a_j := \|\alpha_j\|$, $b_j := \|\beta_j\|$, and $A := \sum_{j \in T} a_j^2$. We assume every $\theta_T$-column participates in the coupling, $b_j \geq b_{\min} > 0$, which is verified numerically for this system in \cref{sec:discussion}.

\begin{proposition}[Diagonal-class iteration-complexity separation, one-way coupling]
\label{prop:diag_lb}
Let $r_{t+1} = (I - \eta_t K^{(t)}) r_t$ with $K^{(t)} = J D_t^{-1} J^\top$ for an arbitrary sequence of positive diagonal matrices $D_t$---including Adam's realized second-moment sequence---and let $r_0$ be a unit residual supported on the $\mathcal{R}_T$ block. Then, $\|\Pi_T r_t\| \geq \langle r_0, r_t\rangle$, where $\Pi_T$ is the orthogonal projector onto the $\mathcal{R}_T$ row block of the residual vector, and
(i)~if $D_t \equiv D$ and $\eta_t \equiv \eta \leq 1/\lambda_{\max}(K^{(t)})$, then $\langle r_0, r_t\rangle \geq \bigl(1 - A/(\gamma^2 b_{\min}^2)\bigr)^t$; thus, for $\gamma^2 b_{\min}^2 \geq 2A$, halving the $\mathcal{R}_T$ component requires $t \geq \tfrac{\ln 2}{2}\,\gamma^2 b_{\min}^2/A = \Omega(\gamma^2)$ iterations, for every choice of $D$;
(ii)~for arbitrary $D_t$ and any stable steps $\eta_t \leq 2/\lambda_{\max}(K^{(t)})$, $\langle r_0, r_t\rangle \geq 1 - 2t\sqrt{A}/(\gamma b_{\min})$; thus, halving requires $t \geq \gamma b_{\min}/(4\sqrt{A}) = \Omega(\gamma)$.
\end{proposition}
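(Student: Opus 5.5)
The argument tracks the scalar $\langle r_0, r_t\rangle$ and bounds how much one step can decrease it. The only structural input is the one-way block pattern of $J$. Everything else is a Rayleigh-quotient estimate that holds for \emph{any} positive diagonal $D_t$.

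\textbf{First claim.} Since $r_0$ is a unit vector supported on the $\mathcal{R}_T$ block, $r_0 = \Pi_T r_0$. Cauchy--Schwarz then gives
\[
\langle r_0, r_t\rangle = \langle r_0, \Pi_T r_t\rangle \leq \|\Pi_T r_t\|.
\]

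\textbf{Key structural estimate.} Write $d_j := [D_t]_{jj} > 0$, so that $K^{(t)} = \sum_j c_j c_j^\top/d_j$.
\begin{itemize}
\item Because $J_{T,\theta_u} = 0$, the $\theta_u$-columns vanish on the $\mathcal{R}_T$ block. Hence only $\theta_T$-columns contribute to
\[
r_0^\top K^{(t)} r_0 = \sum_{j\in T} \langle \alpha_j, r_0\rangle^2 / d_j \leq \sum_{j \in T} a_j^2/d_j .
\]
\item Since $K^{(t)} \succeq c_j c_j^\top/d_j$ for each $j$, we have $\lambda_{\max}(K^{(t)}) \geq \|c_j\|^2/d_j \geq \gamma^2 b_j^2/d_j$. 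This is the same column-norm argument as in \cref{prop:adam_eta}.
\item Using $b_j \geq b_{\min}$, it follows that $1/d_j \leq \lambda_{\max}(K^{(t)})/(\gamma^2 b_{\min}^2)$ for every $j \in T$.
\end{itemize}
Combining these yields the central inequality
\[
r_0^\top K^{(t)} r_0 \leq \frac{A\,\lambda_{\max}(K^{(t)})}{\gamma^2 b_{\min}^2},
\]
which is independent of how $D_t$ was chosen. Intuitively, any diagonal scaling large enough to move $\mathcal{R}_T$ also inflates $\lambda_{\max}$ through the $\gamma\beta_j$ rows.

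\textbf{Part (i).} With $D_t \equiv D$ fixed, $\langle r_0, r_t\rangle = r_0^\top (I-\eta K)^t r_0$. If $\eta \leq 1/\lambda_{\max}$, the eigenvalues of $I-\eta K$ lie in $[0,1]$. Jensen's inequality, applied to the convex map $x \mapsto x^t$ on $[0,1]$ with respect to the spectral measure of $r_0$, then gives
\[
\langle r_0, r_t\rangle \geq \bigl(1 - \eta\, r_0^\top K r_0\bigr)^t \geq \bigl(1 - A/(\gamma^2 b_{\min}^2)\bigr)^t .
\]
For $x := A/(\gamma^2 b_{\min}^2) \leq 1/2$ we have $1-x \geq e^{-2x}$. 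So the bound stays above $1/2$ unless $2xt \geq \ln 2$, which is the stated $\Omega(\gamma^2)$ count.

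\textbf{Part (ii).} Here $D_t$ varies, so I telescope one step at a time:
\[
\langle r_0, r_{t+1}\rangle - \langle r_0, r_t\rangle = -\eta_t \langle K^{(t)} r_0, r_t\rangle \geq -\eta_t \,\|(K^{(t)})^{1/2} r_0\|\,\|(K^{(t)})^{1/2} r_t\|.
\]
\begin{itemize}
\item Stability, $\eta_t \leq 2/\lambda_{\max}$, gives $\|I - \eta_t K^{(t)}\| \leq 1$. Inductively $\|r_t\| \leq 1$, so $\|(K^{(t)})^{1/2} r_t\| \leq \sqrt{\lambda_{\max}}$.
\item The central inequality gives $\|(K^{(t)})^{1/2} r_0\| \leq \sqrt{A\lambda_{\max}}/(\gamma b_{\min})$.
\item The per-step decrease is therefore at most $\eta_t \lambda_{\max}\sqrt{A}/(\gamma b_{\min}) \leq 2\sqrt{A}/(\gamma b_{\min})$. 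The $\lambda_{\max}$ factors cancel, which is what makes the bound hold for an arbitrary sequence $D_t$.
\end{itemize}
Summing over $t$ steps gives $\langle r_0, r_t\rangle \geq 1 - 2t\sqrt{A}/(\gamma b_{\min})$. Requiring this to drop to $1/2$ gives $t \geq \gamma b_{\min}/(4\sqrt{A})$.

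\textbf{Main obstacle.} The step I expect to be delicate is the central inequality, specifically the claim that $\lambda_{\max}(K^{(t)})$ controls \emph{every} $1/d_j$ for $j \in T$. This needs the assumption $b_j \geq b_{\min}$ for every $\theta_T$-column, because a column with no coupling could be rescaled freely. The other subtle point is Jensen in part (i), which needs $I - \eta K \succeq 0$; that is why part (i) uses the stricter step $\eta \leq 1/\lambda_{\max}$ rather than $2/\lambda_{\max}$.
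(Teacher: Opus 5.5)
Your proposal is correct and is essentially the paper's proof. It uses the same $D$-independent estimate $r_0^\top K^{(t)} r_0 \le A\,\lambda_{\max}(K^{(t)})/(\gamma^2 b_{\min}^2)$, which the paper writes as $t_T/\lambda_{\max}\le A/(\gamma^2 b_{\min}^2)$. It then applies Jensen on the spectral expansion for (i) and telescoping with $\|r_s\|\le 1$ for (ii), where your $K$-weighted Cauchy--Schwarz step is equivalent to the paper's $(K^{(s)})^2\preceq\lambda_{\max}^{(s)}K^{(s)}$. One caveat you share with the paper: the monotonicity step $(1-\eta\, r_0^\top K r_0)^t\ge(1-A/(\gamma^2 b_{\min}^2))^t$ in (i) tacitly needs $A\le\gamma^2 b_{\min}^2$, which the halving regime $\gamma^2 b_{\min}^2\ge 2A$ supplies.
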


\begin{proof}
As $J_{T,\theta_u} = 0$, every $\theta_u$ column has a vanishing $\mathcal{R}_T$-block. For any positive diagonal $D$ with diagonal entries $d_j$ and $r_0$ supported on $\mathcal{R}_T$,
\begin{equation}
  \Pi_T K \Pi_T = \sum_{j\in T} d_j^{-1}\, \alpha_j \alpha_j^\top .
\end{equation}
So,
\begin{equation}
  r_0^\top K r_0 \;\leq\; \operatorname{trace}(\Pi_T K \Pi_T) = \sum_{j\in T} a_j^2/d_j \;=:\; t_T.
\end{equation}
Retaining single rank-one terms,
\begin{equation}
  \lambda_{\max}(K)
  \;\geq\; \max_{j\in T} \frac{\|c_j\|^2}{d_j}
  \;\geq\; \gamma^2 b_{\min}^2 \max_{j\in T} d_j^{-1}
  \;\geq\; \gamma^2 b_{\min}^2 \sum_{j\in T} \frac{a_j^2}{A}\, d_j^{-1}
  \;=\; \frac{\gamma^2 b_{\min}^2}{A}\, t_T,
\end{equation}
as a maximum dominates every convex combination. Hence,
\begin{equation}
  t_T/\lambda_{\max}(K) \;\leq\; A/(\gamma^2 b_{\min}^2) \quad \text{for every positive diagonal};
  \label{eq:d_cancel}
\end{equation}
the choice of diagonal cancels.

\emph{Case~(i).}\; With fixed $K$ and $\eta \leq 1/\lambda_{\max}$, expand in the eigenbasis $\{(\mu_i, v_i)\}$:
\begin{equation}
  \langle r_0, r_t\rangle = \sum_i (1-\eta\mu_i)^t \langle r_0, v_i\rangle^2,
\end{equation}
with all factors in $[0,1]$. Convexity of $x \mapsto (1-\eta x)^t$ and Jensen's inequality yield
\begin{equation}
  \langle r_0, r_t\rangle \;\geq\; \bigl(1 - \eta\, r_0^\top K r_0\bigr)^t \;\geq\; \bigl(1 - t_T/\lambda_{\max}\bigr)^t,
\end{equation}
and \eqref{eq:d_cancel} applies. The halving count follows from $-\ln(1-x) \leq 2x$ for $x \leq 1/2$.

\emph{Case~(ii).}\; For $\eta_s \leq 2/\lambda_{\max}(K^{(s)})$, each factor $I - \eta_s K^{(s)}$ has a spectrum in $[-1,1]$; therefore, $\|r_s\| \leq 1$. Telescoping,
\begin{equation}
  1 - \langle r_0, r_t\rangle = \sum_{s<t} \eta_s \langle K^{(s)} r_0,\, r_s\rangle \;\leq\; \sum_{s<t} \eta_s \|K^{(s)} r_0\| .
\end{equation}
As $(K^{(s)})^2 \preceq \lambda_{\max}^{(s)} K^{(s)}$,
\begin{equation}
  \|K^{(s)} r_0\| \;\leq\; \bigl(\lambda_{\max}^{(s)}\, r_0^\top K^{(s)} r_0\bigr)^{1/2} \;\leq\; \bigl(\lambda_{\max}^{(s)}\, t_T^{(s)}\bigr)^{1/2},
\end{equation}
Combining with \eqref{eq:d_cancel}, $\eta_s \|K^{(s)} r_0\| \leq 2\,(t_T^{(s)}/\lambda_{\max}^{(s)})^{1/2} \leq 2\sqrt{A}/(\gamma b_{\min})$; therefore, $1 - \langle r_0, r_t\rangle \leq 2t\sqrt{A}/(\gamma b_{\min})$.
\end{proof}

By contrast, block-diagonal GN in the overparameterized regime yields $K_P = S I$ (equality case of \cref{thm:gn}, \cref{app:proofs}) so that every residual component contracts uniformly in $O(1)$ iterations at $\eta = \Theta(1/S)$.

\begin{corollary}[Weighted threshold, one-way coupling]
\label{cor:weighted}
In the setting of \cref{prop:diag_lb}, fix loss weights $\Lambda = \operatorname{blkdiag}(\lambda_T I, \lambda_u I)$ with $\lambda_T, \lambda_u > 0$. Expressed in the rescaled residual $\tilde r = \Lambda^{1/2} r$, the linearized weighted iteration reads $\tilde r_{t+1} = (I - \eta_t \tilde K^{(t)})\, \tilde r_t$ with $\tilde K^{(t)} = \Lambda^{1/2} J D_t^{-1} J^\top \Lambda^{1/2}$, and for every positive diagonal $D$,
\begin{equation}
  \tilde t_T / \lambda_{\max}(\tilde K) \;\le\; \frac{\lambda_T}{\lambda_u}\cdot\frac{A}{\gamma^2 b_{\min}^2},
  \qquad \tilde t_T := \operatorname{trace}(\Pi_T \tilde K \Pi_T).
  \label{eq:d_cancel_w}
\end{equation}
Consequently, cases (i) and (ii) of \cref{prop:diag_lb} hold with halving counts $\Omega(\gamma^2 \lambda_u/\lambda_T)$ and $\Omega(\gamma \sqrt{\lambda_u/\lambda_T})$, respectively; in particular, restoring the $O(1)$ iteration progress on $\mathcal{R}_T$ within the diagonal class requires $\lambda_T/\lambda_u = \Omega(\gamma^2)$.
\end{corollary}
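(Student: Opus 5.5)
We will redo the trace-cancellation step of \cref{prop:diag_lb} with the weights built into the kernel, and then invoke cases (i) and (ii) unchanged for the rescaled iteration. First, multiplying the weighted linearized update $r_{t+1} = (I - \eta_t J D_t^{-1} J^\top \Lambda) r_t$ by $\Lambda^{1/2}$ gives $\tilde r_{t+1} = (I - \eta_t \tilde K^{(t)})\tilde r_t$ with $\tilde K^{(t)}$ as stated. This $\tilde K^{(t)}$ is symmetric positive semidefinite. Because $\Lambda$ is block-constant, $\Pi_T$ commutes with $\Lambda^{1/2}$. Hence $\tilde r_0 \propto r_0$ stays supported on $\mathcal{R}_T$, and $\|\Pi_T \tilde r_t\| = \sqrt{\lambda_T}\,\|\Pi_T r_t\|$, so halving statements transfer between $r$ and $\tilde r$ directly.

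Next we compute the two sides of \eqref{eq:d_cancel_w}. The rescaled column is $\tilde c_j = \Lambda^{1/2} c_j$. For $j \in T$ this equals $(\sqrt{\lambda_T}\,\alpha_j,\ \sqrt{\lambda_u}\,\gamma\beta_j)$. For $j \in u$ the $\mathcal{R}_T$ block still vanishes, since $J_{T,\theta_u}=0$. This gives
\begin{equation}
  \tilde t_T = \operatorname{trace}(\Pi_T \tilde K \Pi_T) = \lambda_T \sum_{j\in T} a_j^2/d_j .
\end{equation}
For the denominator we keep single rank-one terms as before:
\begin{equation}
  \lambda_{\max}(\tilde K) \ge \max_{j\in T} \|\tilde c_j\|^2/d_j \ge \lambda_u \gamma^2 b_{\min}^2 \max_{j\in T} d_j^{-1} \ge \frac{\lambda_u\gamma^2 b_{\min}^2}{A}\sum_{j\in T}\frac{a_j^2}{d_j}.
\end{equation}
The last inequality uses the convex-combination argument with weights $a_j^2/A$. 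Dividing the two displays cancels $d_j$ and yields \eqref{eq:d_cancel_w}.

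Finally, the proofs of cases (i) and (ii) in \cref{prop:diag_lb} use only four ingredients: symmetry and positive semidefiniteness of the kernel, $r_0^\top K r_0 \le t_T$, the stability restriction on $\eta$, and the ratio bound \eqref{eq:d_cancel}. We therefore rerun them verbatim for $\tilde K^{(t)}$, with the normalized unit residual $\tilde r_0/\|\tilde r_0\|$, after replacing $A/(\gamma^2 b_{\min}^2)$ by $\rho := \lambda_T A/(\lambda_u\gamma^2 b_{\min}^2)$.

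In case (i) this gives $\langle \tilde r_0,\tilde r_t\rangle \ge (1-\rho)^t$. For $\rho\le 1/2$, the inequality $-\ln(1-x)\le 2x$ then gives a halving count of at least $\ln 2/(2\rho) = \Omega(\gamma^2\lambda_u/\lambda_T)$.

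In case (ii) we get $1-\langle\tilde r_0,\tilde r_t\rangle \le 2t\sqrt{\rho}$. The halving count is therefore at least $1/(4\sqrt\rho) = \Omega(\gamma\sqrt{\lambda_u/\lambda_T})$.

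The closing claim follows by requiring the case (ii) count to be $O(1)$. This forces $\sqrt{\rho} = \Omega(1)$, that is, $\lambda_T/\lambda_u = \Omega(\gamma^2)$.

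The main point needing care is that the projection inequality $\|\Pi_T \tilde r_t\| \ge \langle \tilde r_0,\tilde r_t\rangle$ survives the rescaling. It does, because $\tilde r_0$ lies in the range of $\Pi_T$ and the iteration is now symmetric in the $\tilde r$ coordinates. Beyond that, one only has to track where each $\lambda$ enters: $\lambda_T$ enters the trace, while $\lambda_u$ enters the coupling-dominated column norm.
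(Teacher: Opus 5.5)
Your proposal is correct and follows the paper's proof essentially step for step. It uses the same rescaled $\theta_T$-columns $(\sqrt{\lambda_T}\,\alpha_j,\ \sqrt{\lambda_u}\,\gamma\beta_j)$, the same trace identity $\tilde t_T = \lambda_T\sum_{j\in T} a_j^2/d_j$, the same rank-one and convex-combination lower bound on $\lambda_{\max}(\tilde K)$, and the same rerun of the Jensen and telescoping arguments once the block-diagonal $\Lambda^{1/2}$ is seen to preserve $\mathcal{R}_T$-support. Your explicit derivation of the rescaled iteration from $r_{t+1} = (I - \eta_t J D_t^{-1} J^\top \Lambda)\, r_t$ is a small step that the paper leaves implicit.
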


\begin{proof}
The $\theta_T$-columns of $\Lambda^{1/2} J$ are $\tilde c_j = (\sqrt{\lambda_T}\,\alpha_j,\, \sqrt{\lambda_u}\,\gamma\beta_j)$ and the $\theta_u$-columns retain a vanishing $\mathcal{R}_T$-block so $\tilde t_T = \lambda_T \sum_{j\in T} a_j^2/d_j$; additionally, retaining single rank-one terms as before, $\lambda_{\max}(\tilde K) \geq \lambda_u \gamma^2 b_{\min}^2 \max_{j \in T} d_j^{-1} \geq \lambda_u \gamma^2 b_{\min}^2\, \tilde t_T/(\lambda_T A)$, which is \eqref{eq:d_cancel_w}. As $\Lambda^{1/2}$ is block-diagonal, a unit residual supported on $\mathcal{R}_T$ remains $\mathcal{R}_T$-supported after rescaling, and the Jensen and telescoping arguments of \cref{prop:diag_lb} apply with \eqref{eq:d_cancel} replaced by \eqref{eq:d_cancel_w}.
\end{proof}

\begin{remark}[Scope and consequences of the class bound]
\label{rem:diag_class}
The bound covers momentum-free descent. Classical acceleration preserves $\Omega(\gamma)$ separation in case~(i) \citep{nocedal2006numerical}. Unstable steps ($\eta_t > 2/\lambda_{\max}$) yield no descent guarantee. The proposition operates on the linearized residual dynamics. Reshaping the Jacobian based on feature learning lies outside its scope, although \cref{tab:training_stability} confirms that $\gamma$-separation persists at the trained parameter values. The one-way structure is essential: in reaction--diffusion, the rate constant enters the diagonal blocks corresponding to each residual's own network, eliminating the $\gamma$-light row block (\cref{tab:scope}). According to \cref{cor:weighted}, escaping through reweighting requires $\lambda_T/\lambda_u = \Omega(\gamma^2)$, which is outside the design objective of gradient-norm balancing (\cref{sec:balancing}). The obstruction is structural: diagonal scaling can equilibrate but not reshape the Gram matrix's two-block imbalance.
\end{remark}

\begin{table}[ht]
\caption{Scope summary ($\Lambda = \mathrm{blkdiag}(\lambda_i I_{N_r})$). Any $\Lambda > 0$: $\forall\,\lambda_i > 0$;\; uniform $\Lambda$: $\lambda_1 = \cdots = \lambda_M$.\ All bounds hold $\forall\,\theta$ (\cref{rem:scope}). Here $\sigma^2$ abbreviates $\sigma_{\max}^2(C_{ik})$, and $t_{1/2}(\mathcal{R}_T)$ denotes the number of iterations required to halve the $\mathcal{R}_T$ residual component; (i) and (ii) refer to the two cases of \cref{prop:diag_lb}, with case~(i) requiring $\gamma^2 b_{\min}^2 \geq 2A$.}
\label{tab:scope}
\centering
\resizebox{\textwidth}{!}{%
\begin{tabular}{@{}llccc@{}}
\toprule
Statement & Bound & Coupling & Loss weights $\Lambda$ & Arch. \\
\midrule
\cref{thm:gd} & $\lambda_{\max}(K) \geq \gamma^2 \sigma^2$ & Linear (\cref{asm:linear_coupling}) & Uniform $\Lambda$ & $S \!\geq\! 2$ \\
\cref{thm:gn} & $\lambda_{\max}(K_P) \leq S$ & Any & Any $\Lambda > 0$ (\cref{rem:scope}) & $S \!\geq\! 1$ \\
\cref{prop:soap_net} & $\lambda_{\max} \leq SL$ (\cref{asm:soap_approx}) & Any & Any $\Lambda > 0$ & $S \!\geq\! 1$ \\
\cref{prop:adam} & $\lambda_{\max} \geq p/(MN_r) \gg S$ & Any & Any $\Lambda > 0$ (\cref{rem:trace}) & $S \!\geq\! 1$ \\
\cref{prop:adam_eta} & $\Theta(\gamma)$ & Linear & Uniform $\Lambda$ & $S \!\geq\! 2$ \\
\cref{prop:diag_lb} & $t_{1/2}(\mathcal{R}_T) = \Omega(\gamma^2)$ (i); $\Omega(\gamma)$ (ii) & One-way linear & Uniform $\Lambda$ (thr.\ \cref{cor:weighted}) & $S \!\geq\! 2$ \\
\bottomrule
\end{tabular}}
\end{table}

\section{NTK Numerical Verification}
\label{sec:ntk_verification}

We verify the spectral predictions of \cref{sec:theory} by computing $\lambda_{\max}(K)$ and $\lambda_{\max}(K_P)$ at the network initialization stage across the 1D benchmark systems.

\subsection{Methodology}
\label{sec:ntk_methodology}

For each system, we computed the full Jacobian $J \in \mathbb{R}^{MN_r \times p}$ via per-sample backpropagation on interior collocation points. We formed five kernels using $J$, namely
$K = JJ^\top$ (standard NTK);
$K_P^{\mathrm{GN}} = \sum_k P_k$ via economy-size singular value decomposition of each $J_{\theta_k} = U_k \Sigma_k V_k^\top$, where $P_k = U_k U_k^\top$;
$K_P^{\mathrm{lw\text{-}GN}} = \sum_{k,l} P_k^{(l)}$, the layer-wise variant (\cref{prop:soap_net} under \cref{asm:soap_approx});
$K_P^{\mathrm{Adam}} = J D^{-1} J^\top$ with $D = \operatorname{diag}(J^\top J)$;
and $K_{1/2}^{\mathrm{Adam}} = J D^{-1/2} J^\top$.
All kernels were formed from the raw (unweighted) Jacobian, matching the uniform-$\Lambda$ scope of \cref{thm:gd,prop:adam_eta,prop:diag_lb}. The weight-invariant results (\cref{thm:gn,prop:adam}) were unaffected by this choice. Rank and clamping tolerances are listed in \cref{tab:implementation}.

\subsection{Spectral Verification at Initialization}
\label{sec:ntk_thermo}

All networks used four hidden layers with 64 neurons and $N_r = 200$ interior points. For the thermoelasticity benchmark, we performed a detailed $\gamma$-sweep ($\gamma \in \{0.1, 0.5, 1, 2, 5, 10, 25, 50, 100\}$, three seeds; \cref{fig:ntk_init}); for the RD ($k_{\mathrm{f}} = 0.5$ and $30$) and NP+P ($\varepsilon = 1.0$ and $0.1$) benchmarks, we compared the weak and strong coupling cases (\cref{tab:ntk_benchmarks}).

\begin{figure}[t]
  \centering
  \includegraphics[width=\textwidth]{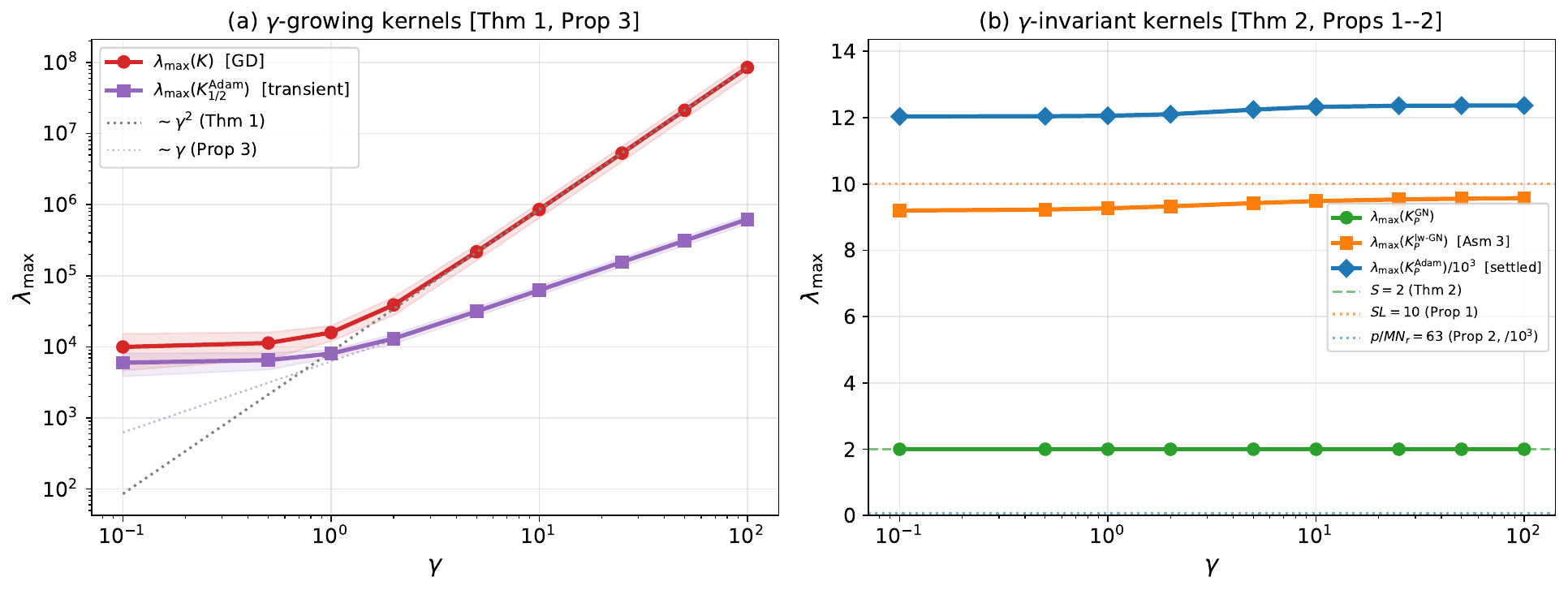}
  \caption{Neural tangent kernel (NTK) spectral analysis at initialization (thermoelasticity, $\gamma \in [0.1, 100]$, three seeds). \textbf{(a)}~$\gamma$-growing kernels (above the crossover $\gamma^{\ast} \approx 9$, \cref{app:single_net}): $\lambda_{\max}(K)$ grows as a function of $\gamma^2$ (\cref{thm:gd}) and $\lambda_{\max}(K_{1/2}^{\mathrm{Adam}})$ grows as a function of $\gamma$ (\cref{prop:adam_eta}). \textbf{(b)}~$\gamma$-invariant kernels: $\lambda_{\max}(K_P^{\mathrm{GN}}) = S = 2$ (\cref{thm:gn}), $\lambda_{\max}(K_P^{\mathrm{lw\text{-}GN}}) \approx 9.3 < SL = 10$ (\cref{prop:soap_net}, under \cref{asm:soap_approx}), and $\lambda_{\max}(K_P^{\mathrm{Adam}}) \approx 1.2 \times 10^{4} \gg p/MN_r \approx 63$ (\cref{prop:adam}); Adam's kernel is rescaled by $10^{-3}$ for visual comparisons. Thm: Theorem; Prop: Proposition; Asm: Assumption; GD: gradient descent.}
  \label{fig:ntk_init}
\end{figure}

\begin{table}[t]
  \caption{Neural tangent kernel (NTK) spectral measurements at initialization across the three 1D benchmark systems (three-seed mean). $S$ = number of networks.}
  \label{tab:ntk_benchmarks}
  \centering
  \small
  \resizebox{\textwidth}{!}{%
  \begin{tabular}{@{}llcccccc@{}}
    \toprule
    System & Coupling & $S$ & $\lambda_{\max}(K)$ & $\lambda_{\max}(K_P^{\mathrm{GN}})$ & $\lambda_{\max}(K_P^{\mathrm{Adam}})$ & $\lambda_{\max}(K_{1/2}^{\mathrm{Adam}})$ \\
    & & & & Bound: $S$ & Bound: $p/MN_r$ & \\
    \midrule
    Thermoelasticity & $\gamma = 0.1$ & 2 & 1.0e4 & 2.0 & 1.2e4 & 6.4e3 \\
    & $\gamma = 1$ & 2 & 1.8e4 & 2.0 & 1.2e4 & 8.8e3 \\
    & $\gamma = 10$ & 2 & 9.9e5 & 2.0 & 1.2e4 & 6.9e4 \\
    & $\gamma = 100$ & 2 & 9.8e7 & 2.0 & 1.2e4 & 6.8e5 \\
    \midrule
    Reaction--Diffusion (RD) & $k_{\mathrm{f}} = 0.5$ & 3 & 3.5e4 & 3.0 & 2.0e4 & 1.4e4 \\
    (asymmetric) & $k_{\mathrm{f}} = 30$ & 3 & 4.7e6 & 3.0 & 2.6e4 & 1.6e5 \\
    \midrule
    Nernst--Planck--Poisson (NP+P) & $\varepsilon = 1.0$ & 3 & 2.4e4 & 3.0 & 1.7e4 & 1.2e4 \\
    (nonlinear) & $\varepsilon = 0.1$ & 3 & 2.0e4 & 3.0 & 1.6e4 & 9.9e3 \\
    \bottomrule
  \end{tabular}}
\end{table}

Four observations are noted from \cref{fig:ntk_init,tab:ntk_benchmarks}: (1)~$\lambda_{\max}(K_P^{\mathrm{GN}}) = S$ exactly in all the systems---an algebraic identity given the full row rank of each $J_{\theta_k}$; the observed equality certifies the overparameterized regime assumed in \cref{prop:adam}. (2)~$\lambda_{\max}(K_P^{\mathrm{Adam}}) \gg S$ (${\sim}5300$--$8700 \times S$) and coupling-flat, confirming the projector-structure loss of \cref{prop:adam}. (3)~$\lambda_{\max}(K_{1/2}^{\mathrm{Adam}})$ grows with the coupling strength in the linearly coupled systems, consistent with the $\Theta(\gamma)$ scaling of \cref{prop:adam_eta}. (4)~In NP+P, neither $\lambda_{\max}(K)$ nor $\lambda_{\max}(K_{1/2}^{\mathrm{Adam}})$ grows with $1/\varepsilon$ at initialization; the coupling difficulty manifests during training (\cref{tab:training_stability}).

\subsection{Spectral Measurements During Training}

The bounds of \cref{sec:theory} hold at any $\theta$ value; thus, we report how the measured kernels behave at trained parameter values.

Across all the tested systems, $\lambda_{\max}(K_P^{\mathrm{GN}})$ remained exactly equal to $S$ at every measured snapshot, certifying that the overparameterization premise of \cref{prop:adam} persists throughout training given the full row rank of each $J_{\theta_k}$. \Cref{tab:training_stability} shows that $K_{1/2}^{\mathrm{Adam}}$ maintains $\gamma$-separated levels throughout training in the linearly coupled benchmarks. In the NP+P case, $K_{1/2}^{\mathrm{Adam}}$ grows more rapidly under strong coupling ($\varepsilon = 0.1$) during training, which we read as the state-dependent coupling emerging as the network approaches the physical solution (\cref{rem:gd_scope}).

\begin{table}[t]
  \caption{Training stability of $\lambda_{\max}(K_{1/2}^{\mathrm{Adam}})$ (single seed, Adam+GradNorm, 30000 epochs). The value of $\lambda_{\max}(K_P^{\mathrm{GN}})$ remains exactly at $S$ at every measured epoch in all systems. It is therefore omitted from the table. Values at initialization are listed in \cref{tab:ntk_benchmarks}.}
  \label{tab:training_stability}
  \centering
  \small
  \begin{tabular}{@{}llccc@{}}
    \toprule
    System & Coupling & $S$ & \multicolumn{2}{c}{$\lambda_{\max}(K_{1/2}^{\mathrm{Adam}})$} \\
    \cmidrule(l){4-5}
    & & & Epoch 14k & Epoch 30k \\
    \midrule
    Thermoelasticity & $\gamma = 1$ & 2 & 4.9e4 & 4.6e4 \\
    & $\gamma = 100$ & 2 & 5.2e5 & 4.0e5 \\
    \midrule
    Reaction--Diffusion & $k_{\mathrm{f}} = 0.5$ & 3 & 5.1e4 & 4.5e4 \\
    & $k_{\mathrm{f}} = 30$ & 3 & 1.2e5 & 8.8e4 \\
    \midrule
    NP+P & $\varepsilon = 1.0$ & 3 & 8.2e4 & 7.3e4 \\
    (nonlinear) & $\varepsilon = 0.1$ & 3 & 1.3e5 & 1.0e5 \\
    \bottomrule
  \end{tabular}
\end{table}

\section{Experimental Setup}

\Cref{tab:implementation} lists the full training configuration for the accuracy experiments of \cref{sec:results}: hardware and software versions, network sizes, optimizer settings, and run counts. The spectral measurements of \cref{sec:ntk_verification} use the separate configuration stated in \cref{sec:ntk_thermo} and summarized at the end of \cref{tab:implementation}.

\textbf{Architecture.}\; Each field variable was approximated by an independent multilayer perceptron (MLP) with disjoint parameters following the segregated-network approach adopted for electrokinetic multiphysics \citep{sun2024physics,merdasi2023physics}. \Cref{fig:architecture} illustrates the training pipeline. All networks use tanh activation and Xavier uniform initialization \citep{glorot2010understanding}. For 1D benchmarks, each network had four hidden layers with 64 neurons. For the 2D system, each network had five hidden layers with 128 neurons. Scalar fields ($\phi$, $c_+$, $c_-$) used single-output MLPs, while the flow network output $(u, v, p)$, yielding four networks ($S = 4$); the single-network variant used at $\varepsilon = 0.01$ (\cref{sec:2d_results}) instead used one network with six outputs.

\begin{figure}[t]
  \centering
  \includegraphics[width=\textwidth]{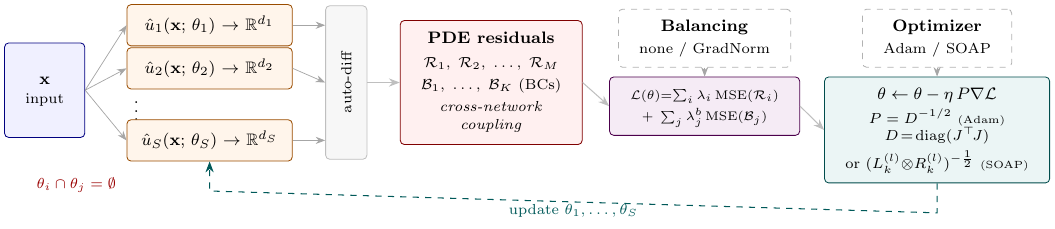}
  \caption{Segregated-network PINN architecture. Each field variable is approximated by an independent multilayer perceptron with disjoint parameters, so that no weight matrix is shared across fields. The inputs of the networks are the spatial coordinates, and the field variables are the outputs. Derivatives of every output are obtained by automatic differentiation (AD) and assembled into the PDE residuals and the boundary residuals. Coupling terms link a residual to the outputs of other networks (dashed arrows). The residual and boundary mean-squared errors are combined into the training loss, with the loss weights either fixed at one or set by inverse-gradient-norm balancing (GradNorm). The optimizer then updates all parameter groups simultaneously, using the diagonal preconditioner for Adam and a Kronecker-factored preconditioner for SOAP.}
  \label{fig:architecture}
\end{figure}

\textbf{Collocation.}\; Uniform sampling for Thermo and RD (200 interior points). For NP+P and 2D NP+P+Stokes, half uniform and half concentrated within a boundary-layer strip of thickness $\delta_{\mathrm{BL}} = \min(5\varepsilon,\, 0.3)$ (2D) or $\min(5\varepsilon,\, 0.5)$ (1D) adjacent to each wall. Interior points: 300 (1D); 3000 (2D). Boundary: 200 per edge (2D).

\textbf{Methods.}\; Four configurations were used for 1D (Adam, SOAP, Adam+GradNorm, SOAP+GradNorm) and two (SOAP+GradNorm, Adam+GradNorm) for 2D. The learning rate was equal to $10^{-3}$ (Thermo, RD) or $3\!\times\!10^{-4}$ (NP+P, 2D). Regarding the SOAP momentum parameters, $\beta = (0.99, 0.999)$ for all the systems, and the preconditioner update frequency was equal to every 2 steps. GradNorm: update frequency 1000 steps, exponential moving average momentum $= 0.9$. All optimizers shared identical learning rates, schedules, collocation sets, and epoch budgets within each benchmark, held fixed across coupling strengths so that accuracy differences reflect preconditioner structure and not per-configuration tuning.

\textbf{Training.}\; Different epochs were used for 1D (30000) and 2D (50000). Three seeds per configuration were used. In total, 210 runs were required for the main coupling-robustness study, and 12 additional runs for the $\varepsilon = 0.01$ architecture comparison (2 optimizers $\times$ 2 architectures $\times$ 3 seeds). All experiments were performed on a single NVIDIA RTX 4060 Ti (8\,GB) graphics processing unit (GPU).

\textbf{Metric.}\; Final-epoch relative $L_2$ error, averaged across field variables and seeds. Coupling degradation ratio: $\bar{L}_2(\text{strong}) / \bar{L}_2(\text{weak})$. For the 2D benchmark, $v$ and $p$ are excluded because the reference had $v^* \equiv 0$ and $p^* \equiv 0$, for which a relative $L_2$ error was undefined. The absolute deviations of the runs shown in \cref{fig:2d_adam_history,fig:2d_soap_history,fig:2d_adam_sn_history,fig:2d_soap_sn_history} are reported in the corresponding captions.

\section{Results}
\label{sec:results}

We report the results in three steps. \Cref{sec:main_result} measures how final-epoch accuracy scales with coupling strength across the three 1D benchmarks and establishes that SOAP+GradNorm is the only configuration whose degradation remains bounded in every regime tested. \Cref{sec:ablation} separates the two ingredients of this configuration. \Cref{sec:2d_results} then asks whether the same behavior survives when the system is scaled up, applying the configuration to the 2D six-residual electro-osmotic flow benchmark (down to $\varepsilon = 0.01$). Previous multiphysics-PINN studies managed coupling by restructuring the network or the variational formulation \citep{harandi2024mixed,haghighat2022physics,sun2024physics}. The preconditioner is an orthogonal axis, and the experiments below isolate it: architecture, learning rate, collocation set, and epoch budget are held fixed, and only the preconditioner and the balancing scheme vary.

\subsection{Main Result: Coupling Robustness}
\label{sec:main_result}

\Cref{tab:degradation} reports coupling degradation ratios in the final-epoch metric. Full per-configuration means and standard deviations appear in \cref{app:results}. \Cref{tab:absolute} lists the underlying absolute errors at the weakest and strongest coupling of each system.

\begin{table}[t]
  \caption{Coupling degradation (Final $L_2$, three-seed mean). FAIL: Final $L_2 > 0.1$ at the strongest coupling.}
  \label{tab:degradation}
  \centering
  \small
  \begin{tabular}{@{}lccc@{}}
    \toprule
    Method & Thermo ($\gamma\!:\!1{\to}100$) & RD ($k\!:\!1{\to}100$) & NP+P ($\varepsilon\!:\!1{\to}0.1$) \\
    \midrule
    Adam      & $16.0\times$ & $9.6\times$  & FAIL \\
    SOAP      & $1.1\times$  & $1.5\times$  & $118\times$ \\
    Adam+GradNorm   & FAIL & $40.4\times$ & FAIL \\
    \textbf{SOAP+GradNorm} & $\mathbf{0.6\times}$ & $\mathbf{0.9\times}$ & $\mathbf{2.3\times}$ \\
    \bottomrule
  \end{tabular}
\end{table}

\begin{table}[t]
  \caption{Absolute Final $L_2$ at weak and strong coupling (three-seed mean).}
  \label{tab:absolute}
  \centering
  \small
  \begin{tabular}{@{}lcccccc@{}}
    \toprule
    Method & \multicolumn{2}{c}{Thermo} & \multicolumn{2}{c}{RD} & \multicolumn{2}{c}{NP+P} \\
    \cmidrule(lr){2-3} \cmidrule(lr){4-5} \cmidrule(lr){6-7}
    & $\gamma\!=\!1$ & $\gamma\!=\!100$ & $k\!=\!1$ & $k\!=\!100$ & $\varepsilon\!=\!1$ & $\varepsilon\!=\!0.1$ \\
    \midrule
    Adam     & 3.6e-3 & 5.7e-2 & 2.0e-3 & 2.0e-2 & 1.4e-3 & FAIL \\
    SOAP     & 9.0e-5 & 9.7e-5 & 3.3e-5 & 4.7e-5 & 2.9e-5 & 3.4e-3 \\
    Adam+GradNorm  & 9.5e-4 & FAIL   & 1.2e-3 & 4.7e-2 & 2.0e-3 & FAIL \\
    SOAP+GradNorm & \textbf{1.2e-4} & \textbf{7.4e-5} & \textbf{2.9e-5} & \textbf{2.7e-5} & \textbf{1.8e-5} & \textbf{4.2e-5} \\
    \bottomrule
  \end{tabular}
\end{table}

The bound $\lambda_{\max}(K_P^{\mathrm{GN}})$ is independent of the coupling strength (\cref{sec:theory}), and the accuracy profile of SOAP+GradNorm (green) is likewise flat in \cref{fig:l2_vs_coupling}.

\begin{figure}[t]
  \centering
  \includegraphics[width=\textwidth]{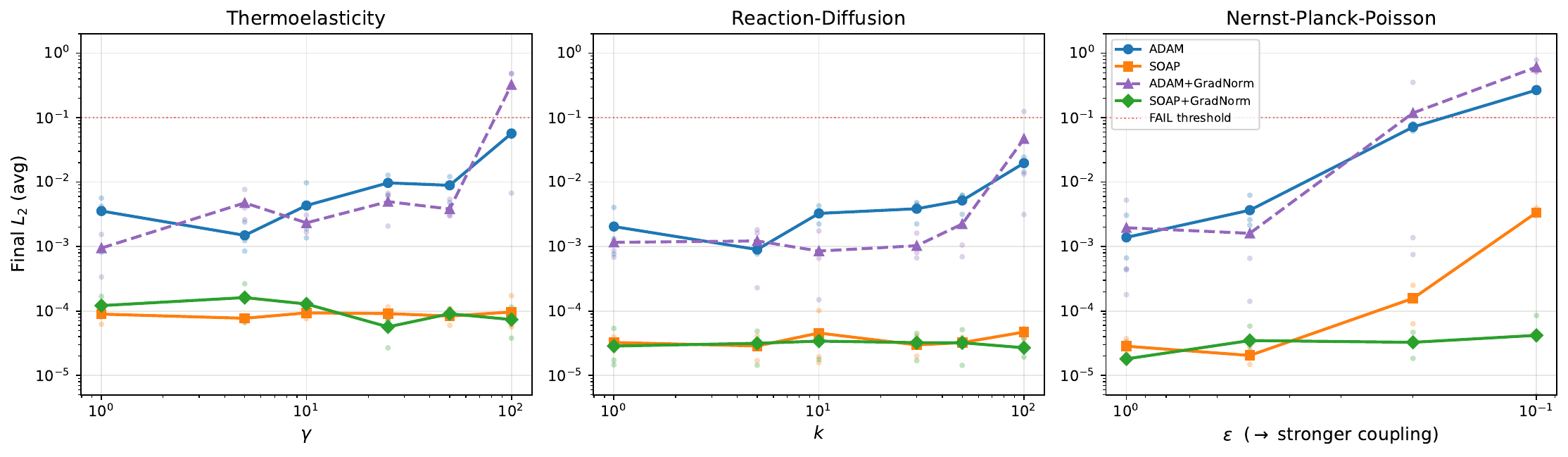}
  \caption{Final $L_2$ error vs.\ coupling strength. SOAP+GradNorm (green) remains flat across coupling strengths, matching the coupling-independent spectral bound of the preconditioned NTK (\cref{thm:gn}).}
  \label{fig:l2_vs_coupling}
\end{figure}

\subsection{Ablation}
\label{sec:ablation}

\textbf{Preconditioning (SOAP).}\; At the strongest coupling in each linear system, replacing Adam with SOAP (both unbalanced) improves Final $L_2$ by over two orders of magnitude (\cref{tab:absolute}), consistent with the spectral advantage of block-diagonal over diagonal preconditioning (\cref{thm:gn,prop:adam}).

\textbf{Gradient balancing (GradNorm).}\; In linear systems, SOAP alone maintains a modest degradation (\cref{tab:degradation}). In nonlinear NP+P, the final $L_2$ error of SOAP alone degrades by $118\times$; adding GradNorm reduces this to $2.3\times$.

\subsection{2D Electro-osmotic Flow}
\label{sec:2d_results}

\begin{table}[t]
  \caption{2D NP+P+Stokes Final $L_2$ ($\zeta = 1.0$, three-seed mean $\pm$ standard deviation). Rows for $\varepsilon \geq 0.05$ use the segregated architecture ($S = 4$). At $\varepsilon = 0.01$, both segregated and single-network ($S = 1$) architectures are compared.}
  \label{tab:2d_npps}
  \centering
  \small
  \begin{tabular}{@{}llccc@{}}
    \toprule
    $\varepsilon$ & Architecture & SOAP+GradNorm & Adam+GradNorm & Ratio \\
    \midrule
    0.2  & Segregated ($S\!=\!4$) & 2.68e-4 $\pm$ 2.0e-4 & FAIL ($0.32 \pm 0.07$) & $1202\times$ \\
    0.1  & Segregated ($S\!=\!4$) & 5.12e-4 $\pm$ 1.0e-4 & FAIL ($0.32 \pm 0.06$) & $623\times$ \\
    0.05 & Segregated ($S\!=\!4$) & 2.40e-3 $\pm$ 2.2e-3 & FAIL ($0.47 \pm 0.17$) & $194\times$ \\
    \midrule
    0.01 & Segregated ($S\!=\!4$) & FAIL ($0.66 \pm 0.38$) & FAIL ($1.27 \pm 0.02$) & --- \\
    0.01 & Single ($S\!=\!1$) & \textbf{1.30e-2 $\pm$ 5.2e-3} & FAIL ($2.59 \pm 1.25$) & $199\times$ \\
    \bottomrule
  \end{tabular}
\end{table}

Adam+GradNorm fails at all $\varepsilon$ (all values above the $0.1$ failure threshold), while SOAP+GradNorm achieves an accuracy of $10^{-4}$ at $\varepsilon = 0.2$ with the segregated architecture. At $\varepsilon = 0.01$---a regime where the Debye length $\lambda_D = \varepsilon$ reduces to $1\%$ of the channel width---the segregated architecture reaches its limit (SOAP+GradNorm: $L_2 = 0.66$); however, switching to a single-network architecture ($S = 1$) recovers $L_2 = 1.3 \times 10^{-2}$, a ${\sim}50\times$ improvement; because the segregated baseline did not converge, this ratio marks a transition from failure to success, not a refinement of an already converged solution. Adam+GradNorm fails under both architectures; therefore, the architectural change does not recover it. Training histories and field-error maps for both optimizers are presented in \cref{app:2d_training}. Seed-to-seed spread is substantial at all $\varepsilon$ (\cref{tab:2d_npps}); therefore, the robust observations are the two-to-three-order-of-magnitude gaps between optimizers, not the individual values.

\section{Discussion}
\label{sec:discussion}

The mechanism behind coupling-robust accuracy is best understood by connecting the theory of \cref{sec:theory} to the empirical trends of \cref{sec:results}. We first explain why Adam's diagonal preconditioning fails to protect against coupling, whereas block-diagonal preconditioning does, and how this failure manifests both in the stable learning rate and in the per-residual distribution of convergence. We then interpret two empirical findings through this lens---how the preferred architecture interacts with the optimizer at $\varepsilon = 0.01$ and why the $\varepsilon$-controlled NP+P system retains residual degradation despite the coupling-independent spectral bound. Relative to prior multiphysics-PINN strategies that manage coupling architecturally---mixed formulations \citep{harandi2024mixed}, sequential field splitting \citep{haghighat2022physics}, and segregated networks \citep{sun2024physics}---our results identify preconditioner structure as an additional and largely independent lever; within a fixed architecture, changing only the preconditioner shifts the strong-coupling error by two to three orders of magnitude. The two axes also interact, as the preferred architecture itself depends on the optimizer (see below). Therefore, we do not claim that preconditioning supersedes architectural design. We close with the computational cost of this choice and the limitations of the analysis.

\textbf{Reasons why Adam does not resolve coupling.}\; Diagonal preconditioning destroys the projector structure that maintains $\lambda_{\max}(K_P) \leq S$ (\cref{prop:adam}). Under linear coupling it also shrinks the stable learning rate as $O(1/\gamma)$ (\cref{prop:adam_eta}). The failures of Adam+GradNorm across all the benchmarks (\cref{tab:degradation}) are consistent with both. The data indicate that under identical learning rates, SOAP+GradNorm remains flat across coupling strengths while Adam degrades, a differential response that scalar step-size rescaling cannot produce. At $\gamma = 100$, plain Adam converges ($L_2 = 5.7\times10^{-2}$) while Adam+GradNorm fails, indicating that an interaction with adaptive weighting is responsible rather than step-size excess. \Cref{prop:adam} concerns a structural limitation, not final accuracy; the bound $\lambda_{\max} \leq S$ is unavailable to a diagonal preconditioner at any $\theta$, for any coupling type and any loss weighting.

\textbf{Per-residual convergence under the residual-dynamics kernel.}\; The argument below is a mechanism sketch, not a formal result; it relies on a column condition verified numerically for this system. Beyond shrinking the stable learning rate, the coupling-scaled column weights underlying \cref{prop:adam_eta} also redistribute spectral energy across residuals; this directional effect can be quantified directly on the residual-dynamics kernel $K_{1/2}^{\mathrm{Adam}} = \sum_j \|c_j\|\,\tilde{c}_j\tilde{c}_j^\top$. Define the per-residual trace, where $R_\alpha$ denotes the row block of residual $\alpha$,
\begin{equation}
  \operatorname{trace}_{R_\alpha}\!\bigl(K_{1/2}^{\mathrm{Adam}}\bigr) := \sum_j \|c_j\|\, \|\tilde{c}_j[R_\alpha]\|^2 .
  \label{eq:per_block_trace}
\end{equation}
For the thermoelasticity Jacobian structure, $\theta_T$-columns satisfy $\|\tilde{c}_j[R_T]\|^2 = a_j^2/(a_j^2 + \gamma^2 b_j^2)$ (with $a_j$ and $b_j$ as defined in \cref{prop:diag_lb}), while $\theta_u$-columns have $\|\tilde{c}_j[R_T]\|^2 = 0$ exactly. Assuming that every retained $\theta_T$ column enters the coupling term ($b_j > 0$ for all $j \in T$; a column with $b_j = 0$ would keep an $O(1)$ weight, and no such column occurs numerically in this system):
\begin{equation}
  \operatorname{trace}_{R_T}\!\bigl(K_{1/2}\bigr) = \sum_{j \in T} \frac{a_j^2}{\|c_j\|} \leq \sum_{j \in T} \frac{a_j^2}{\gamma\, b_j} = O(1/\gamma),
  \label{eq:trace_RT_decay}
\end{equation}
while $\operatorname{trace}(K_{1/2}) = \sum_j \|c_j\| = \Theta(\gamma)$ (\cref{rem:adam_eta}); therefore, the $R_T$ fraction decays as $O(1/\gamma^2)$. The standard eigenvector bound then applies to $K_{1/2}$ itself: for the top eigenvector $v_1$ (with eigenvalue $\lambda_1 = \Theta(\gamma)$),
\begin{equation}
  \|v_1[R_T]\|^2 \;\leq\; \frac{\operatorname{trace}_{R_T}(K_{1/2})}{\lambda_1} \;=\; \frac{O(1/\gamma)}{\Theta(\gamma)} \;=\; O(1/\gamma^2) \;\to\; 0.
  \label{eq:eigvec_bound}
\end{equation}
Numerically ($\gamma = 10$), the $R_T$ fraction is $0.3\%$ in $K_{1/2}^{\mathrm{Adam}}$, in line with the $O(1/\gamma^2)$ decay derived above. This exclusion is invariant to scalar learning-rate rescaling (which preserves eigenvector composition) and is not removed by reweighting within the diagonal class (\cref{cor:weighted}). Under linearized residual dynamics, the convergence time for $R_T$ is governed by the ratio of the top eigenvalue to that of the slow directions carrying its energy, which grows large when that energy concentrates in slow eigendirections. GN eliminates this mechanism as the preconditioned kernel is a multiple of the identity in the overparameterized regime (equality case of \cref{thm:gn}). For the one-way case, this exclusion is the finite-time signature of the class-level bound of \cref{prop:diag_lb}, according to which no diagonal reweighting or adaptation restores the $O(1)$ iteration progress on $\mathcal{R}_T$. Although \cref{prop:diag_lb} is proved only for the one-way case, the same pattern appears in the bidirectionally coupled RD benchmark; as $k_{\mathrm{f}}\!: 1 \to 100$, the dominant-eigenvector share of $R_A$ decreases from $22.3\%$ to $5.0\%$.

\textbf{Architecture selection.}\;
\label{sec:discussion_arch}
In the 2D NP+P+Stokes benchmark ($\varepsilon = 0.01$), SOAP+GradNorm with a single shared network achieves $L_2 = 0.013$, a ${\sim}50\times$ improvement over the segregated design ($L_2 = 0.66$), a direction consistent with the layer-wise bound tightening from $\lambda_{\max}(K_P^{\mathrm{SOAP}}) \leq SL$ to $\leq L$ (\cref{prop:soap_net,rem:single_net}). Adam shows the opposite trend ($L_2$: $1.27 \to 2.59$), with both values above the failure threshold. This comparison rests on a single operating point ($\varepsilon = 0.01$, 2 optimizers $\times$ 2 architectures $\times$ 3 seeds), and three of the four cells did not converge. The architectural reading is therefore indicative only.

\textbf{Coupling vs.\ stiffness in NP+P.}\; The parameter $\varepsilon$ controls both the coupling strength and solution stiffness. In the nondimensionalized Poisson equation (\cref{sec:benchmarks}), $\varepsilon$ is the Debye length; thus, reducing $\varepsilon$ from $1.0$ to $0.1$ simultaneously narrows the $O(\varepsilon)$ boundary layer and increases the effective coupling coefficient $1/\varepsilon^2$ by two orders of magnitude. The preconditioned NTK bound (\cref{thm:gn}) guarantees that $\lambda_{\max}(K_P^{\mathrm{GN}}) \leq S$ independently of $\varepsilon$, yet SOAP+GradNorm still exhibits $2.3\times$ degradation in NP+P (\cref{tab:degradation})---worse than the near-zero degradation in linear systems. This residual gap likely reflects the approximation challenge of resolving $O(\varepsilon)$ boundary layers and the gap between linearized NTK dynamics and actual training. A precise characterization remains open.

\textbf{Computational cost.}\; SOAP incurs a wall-clock overhead per epoch of ${\sim}1.6\times$ compared with Adam on a single NVIDIA RTX 4060 Ti GPU owing to Kronecker factor maintenance. Given the substantial accuracy improvements documented in \cref{tab:absolute}, this overhead is modest.

\textbf{Limitations.}\; The spectral bounds operate on linearized residual dynamics. Their connection to final-epoch accuracy is supported empirically (\cref{tab:degradation,tab:absolute}) but not by a convergence theorem. The transient-phase results (\cref{thm:gd,prop:adam_eta,prop:diag_lb}) assume uniform loss weights. A general weighted extension beyond the one-way case (\cref{cor:weighted}) constitutes future work. The 2D benchmark is $x$-invariant by design (\cref{sec:eof}); therefore, whether the coupling robustness reported here persists under axially varying boundary conditions remains untested. \Cref{asm:soap_approx} states a layer-wise correspondence between SOAP's Kronecker factors and the GN matrix. It is not verified numerically for the realized preconditioner; therefore, \cref{prop:soap_net} describes the exact layer-wise GN form, not SOAP's running-average approximation.

\section{Conclusion}

We have shown that the coupling-induced accuracy degradation of multiphysics PINNs is governed by the spectral structure of the preconditioned NTK and that a Kronecker-preconditioned optimizer with gradient-norm balancing keeps the resulting degradation bounded across a range of systems. Block-diagonal GN preconditioning bounds $\lambda_{\max}(K_P) \leq S$ independent of coupling strength, PDE structure, and network parameterization, while gradient descent ($\Omega(\gamma^2)$) and Adam ($\Theta(\gamma)$ in the transient phase; $p/(MN_r) \gg S$ in the settled phase) retain $\gamma$- or dimension-dependent spectral radii. For one-way coupling, this gap is class-wide: no diagonal preconditioner matches block GN's $O(1)$ iteration complexity on the driving residual (\cref{prop:diag_lb}).

Across 222 experiments, SOAP+GradNorm is the only configuration whose degradation remains bounded in every regime tested and the only one that succeeds on the 2D six-residual system, preserving weak-coupling accuracy in linear systems and limiting degradation to $2.3\times$ in nonlinear NP+P (\cref{tab:degradation}). Scaled up to a 2D, six-residual EOF system ($\varepsilon = 0.01$), SOAP+GradNorm with a single-network architecture still converges ($L_2 = 1.3 \times 10^{-2}$) with the Debye layer resolved, on an $x$-invariant reference. More broadly, these results reframe coupling-induced accuracy loss in multiphysics PINNs as a problem of preconditioner structure rather than of loss weighting: the decisive lever is whether curvature is inverted block-wise across fields or merely rescaled diagonally. This shift replaces case-by-case loss-weight tuning with a structural criterion for training PINNs on strongly coupled, stiff systems such as EDL-resolved electrokinetics. Extensions to time-dependent multiphysics, genuinely 2D geometries, and convergence guarantees beyond spectral conditioning constitute the scope of future work.

\section*{Data statement}

The code and data supporting the findings of this study are openly available at \url{https://github.com/YoungjaePark99/PINNmultiphysicsSOAP}.

\bibliographystyle{unsrtnat}
\bibliography{references}

\newpage
\appendix

\section{Proofs}
\label{app:proofs}

\subsection{Proof of \cref{thm:gd}}
The $(i,i)$-th diagonal block of $K = JJ^\top$ is $K_{[i,i]} = \sum_{j=1}^{S} J_{i,\theta_j} J_{i,\theta_j}^\top$. As every term in the sum is positive semidefinite, $K_{[i,i]} \succeq J_{i,\theta_k} J_{i,\theta_k}^\top = \gamma^2 C_{ik} C_{ik}^\top$, hence $\lambda_{\max}(K_{[i,i]}) \geq \gamma^2 \sigma_{\max}^2(C_{ik})$. Given that $K_{[i,i]}$ is a principal submatrix of the positive semidefinite matrix $K \in \mathbb{R}^{MN_r \times MN_r}$, $\lambda_{\max}(K) \geq \lambda_{\max}(K_{[i,i]})$ follows from the variational characterization of eigenvalues (any unit vector in the subspace can be zero-padded to a unit vector in the full space without changing its Rayleigh quotient), thus completing the proof.

\subsection{Proof of \cref{lem:projector}}
From the block structure of $H$, $K_P = J H^{+} J^\top = \sum_k J_{\theta_k} H_k^{+} J_{\theta_k}^\top$. For each term, $J_{\theta_k} (J_{\theta_k}^\top J_{\theta_k})^{+} J_{\theta_k}^\top$ is the orthogonal projector onto $\mathrm{col}(J_{\theta_k})$ (standard result from matrix analysis).

\subsection{Equality Condition in \cref{thm:gn}}
If $\mathbf{v} \in \mathrm{col}(J_{\theta_k})$ for all $k$, then $P_k \mathbf{v} = \mathbf{v}$ and $\|P_k \mathbf{v}\| = 1$ for all $k$, achieving $\mathbf{v}^\top K_P \mathbf{v} = S$. In the overparameterized regime ($p_k > MN_r$ with $J_{\theta_k}$ full row rank), $\mathrm{col}(J_{\theta_k}) = \mathbb{R}^{MN_r}$ for all $k$, so $P_k = I$ and $K_P = SI$.

\subsection{Proof of \cref{prop:soap_net}}

\emph{Part (1).}\; The layer-level index set $\{(k,l) : k=1,\ldots,S,\; l=1,\ldots,L\}$ is a refinement of the network-level partition $\{k : k=1,\ldots,S\}$; each network group $\{\theta_k\}$ is equal to the union $\bigcup_{l=1}^{L} \{\theta_k^{(l)}\}$. A matrix that is block-diagonal with respect to a finer partition is block-diagonal with respect to any coarser partition. As $H^{\mathrm{SOAP}}$ has zero blocks between any two distinct index pairs $(k,l) \neq (k',l')$, it has zero blocks between distinct networks $k \neq k'$.

\emph{Part (2).}\; From the block-diagonal structure, $K_P^{\mathrm{SOAP}} = \sum_{k,l} P_k^{(l)}$, where $P_k^{(l)} = J_{\theta_k^{(l)}}(\tilde{H}_k^{(l)})^+ J_{\theta_k^{(l)}}^\top$ is the orthogonal projector onto $\mathrm{col}(J_{\theta_k^{(l)}})$ (when the Kronecker approximation is exact). For any unit vector~$\mathbf{v}$,
\begin{equation}
  \mathbf{v}^\top K_P^{\mathrm{SOAP}}\, \mathbf{v} = \sum_{k=1}^{S}\sum_{l=1}^{L} \|P_k^{(l)}\mathbf{v}\|^2 \leq \sum_{k=1}^{S}\sum_{l=1}^{L} 1 = SL,
\end{equation}
where $P_k^{(l)}$ is the orthogonal projector onto the column space of the layer-wise Jacobian block $J_{\theta_k^{(l)}}$, $S$ is the number of networks, and $L$ is the number of layers per network.

\subsection{\cref{prop:adam_eta}: Conventions, Single-network Variant, and Upper Bound}
\label{app:adam_eta_proofs}

\emph{Zero columns.} Columns with $D_{jj} = 0$ (parameters whose interior-residual gradient vanishes identically, e.g., output biases annihilated by every differential operator in the system) do not contribute to either kernel under any regularization convention. The sums in \cref{eq:adam_half_kernel,eq:adam_norm_kernel} are restricted accordingly.

\emph{Matching upper bound.} For all linearly coupled benchmarks in this study, the residuals are affine in the coupling parameter, as is the Jacobian: $J = J^{(0)} + \gamma\, J^{(1)}$, with $c_j^{(0)}$ and $c_j^{(1)}$ denoting the $j$-th columns of $J^{(0)}$ and $J^{(1)}$, respectively. Then, $\|c_j\| \leq \|c_j^{(0)}\| + \gamma\, \|c_j^{(1)}\|$ for every column, and
\begin{equation}
  \lambda_{\max}\bigl(K_{1/2}^{\mathrm{Adam}}\bigr) \;\leq\; \operatorname{trace}\bigl(K_{1/2}^{\mathrm{Adam}}\bigr) = \sum_j \|c_j\| \;\leq\; T_0 + \gamma\, T_1,
  \qquad T_0 := \sum_j \bigl\|c_j^{(0)}\bigr\|,\quad T_1 := \sum_j \bigl\|c_j^{(1)}\bigr\|,
\end{equation}
Therefore, $\lambda_{\max}(K_{1/2}^{\mathrm{Adam}}) = \Theta(\gamma)$. The trace identity follows from $\operatorname{trace}(J D^{-1/2} J^\top) = \operatorname{trace}(D^{-1/2} J^\top J) = \sum_j [J^\top J]_{jj} / \sqrt{D_{jj}} = \sum_j \sqrt{D_{jj}} = \sum_j \|c_j\|$.

\subsection{Single-network Extensions}
\label{app:single_net}

\emph{Single-network variant.} Based on the single-network hypothesis of \cref{rem:single_net}, coupling enters additively within each row block: $J_i = A_i + \gamma\, C_i$ with $A_i, C_i$ independent of $\gamma$ and $C_i \neq 0$. Fix $j$ with $[C_i]_{:,j} \neq 0$. Restricting the column $c_j$ to the rows of residual $i$ and applying the reverse triangle inequality yields
\begin{equation}
  \|c_j\| \;\geq\; \bigl\| [A_i]_{:,j} + \gamma\, [C_i]_{:,j} \bigr\| \;\geq\; \gamma\, \bigl\| [C_i]_{:,j} \bigr\| - \bigl\| [A_i]_{:,j} \bigr\| \;\geq\; \tfrac{\gamma}{2}\, \bigl\| [C_i]_{:,j} \bigr\|
  \qquad \text{for } \gamma \geq 2\gamma_j^{\ast} := 2\, \frac{\bigl\| [A_i]_{:,j} \bigr\|}{\bigl\| [C_i]_{:,j} \bigr\|} .
\end{equation}
The rank-one argument of \cref{prop:adam_eta} yields $\lambda_{\max}(K_{1/2}^{\mathrm{Adam}}) \geq \tfrac{\gamma}{2}\, \nu(C_i)$ above the columnwise crossover---mirroring the relation between \cref{thm:gd} and its single-network extension (\cref{rem:single_net}). On the thermoelasticity initialization data, the columnwise crossover $\gamma^{\ast} := \max_j \gamma_j^{\ast}$ evaluates to $\gamma^{\ast} \approx 9$, consistent with the muted $\gamma\!: 1 \to 10$ ratios in \cref{fig:ntk_init}.

\emph{GD bound.}\; $\sigma_{\max}(J_i) \geq \gamma\,\sigma_{\max}(C_i) - \sigma_{\max}(A_i)$, hence $\lambda_{\max}(K) \geq (\gamma\,\sigma_{\max}(C_i) - \sigma_{\max}(A_i))^2$ for $\gamma \geq \gamma_0 := \sigma_{\max}(A_i)/\sigma_{\max}(C_i)$.

\emph{GN bound.}\; $K_P^{\mathrm{GN}} = P_1$, $\lambda_{\max} \leq 1$.

\emph{SOAP bound.}\; $K_P^{\mathrm{SOAP}} = \sum_l P^{(l)}$, $\lambda_{\max} \leq L$.

\section{Full Final $L_2$ Results}
\label{app:results}

All reported values are the mean $\pm$ sample standard deviation over three seeds. Deg denotes the coupling degradation ratio.

\subsection{Thermoelasticity --- Final $L_2$}

\begin{table}[H]
  \centering
  \resizebox{\textwidth}{!}{%
  \begin{tabular}{@{}lccccccccc@{}}
    \toprule
    Method & $\gamma\!=\!1$ & CV & $\gamma\!=\!5$ & $\gamma\!=\!10$ & $\gamma\!=\!25$ & $\gamma\!=\!50$ & $\gamma\!=\!100$ & CV & Deg \\
    \midrule
    Adam           & 3.56e-03$\pm$2.48e-03 & 70\% & 1.49e-03$\pm$8.05e-04 & 4.34e-03$\pm$4.71e-03 & 9.73e-03$\pm$3.27e-03 & 8.93e-03$\pm$3.75e-03 & 5.69e-02$\pm$5.29e-03 & 9\% & $16.0\times$ \\
    SOAP           & 8.95e-05$\pm$2.39e-05 & 27\% & 7.72e-05$\pm$7.87e-06 & 9.36e-05$\pm$1.59e-05 & 9.12e-05$\pm$2.11e-05 & 8.34e-05$\pm$2.47e-05 & 9.66e-05$\pm$6.57e-05 & 68\% & $1.1\times$ \\
    Adam+GradNorm  & 9.48e-04$\pm$6.04e-04 & 64\% & 4.76e-03$\pm$2.63e-03 & 2.33e-03$\pm$7.11e-04 & 4.98e-03$\pm$2.53e-03 & 3.83e-03$\pm$1.38e-03 & FAIL ($0.32$) &  & FAIL \\
    SOAP+GradNorm  & 1.21e-04$\pm$4.42e-05 & 36\% & 1.62e-04$\pm$9.98e-05 & 1.28e-04$\pm$1.86e-05 & 5.67e-05$\pm$2.58e-05 & 9.06e-05$\pm$1.43e-05 & 7.37e-05$\pm$3.89e-05 & 53\% & $0.6\times$ \\
    \bottomrule
  \end{tabular}}
\end{table}

\subsection{Reaction--Diffusion --- Final $L_2$}

\begin{table}[H]
  \centering
  \resizebox{\textwidth}{!}{%
  \begin{tabular}{@{}lccccccccc@{}}
    \toprule
    Method & $k\!=\!1$ & CV & $k\!=\!5$ & $k\!=\!10$ & $k\!=\!30$ & $k\!=\!50$ & $k\!=\!100$ & CV & Deg \\
    \midrule
    Adam           & 2.04e-03$\pm$1.76e-03 & 86\% & 9.00e-04$\pm$1.74e-04 & 3.27e-03$\pm$1.03e-03 & 3.87e-03$\pm$1.41e-03 & 5.20e-03$\pm$1.76e-03 & 1.97e-02$\pm$5.06e-03 & 26\% & $9.6\times$ \\
    SOAP           & 3.25e-05$\pm$7.73e-06 & 24\% & 2.87e-05$\pm$1.20e-05 & 4.56e-05$\pm$4.84e-05 & 2.97e-05$\pm$1.05e-05 & 3.24e-05$\pm$5.91e-06 & 4.72e-05$\pm$9.44e-06 & 20\% & $1.5\times$ \\
    Adam+GradNorm  & 1.16e-03$\pm$6.83e-04 & 59\% & 1.22e-03$\pm$8.66e-04 & 8.52e-04$\pm$8.15e-04 & 1.03e-03$\pm$5.12e-04 & 2.24e-03$\pm$2.37e-03 & 4.70e-02$\pm$6.74e-02 & 143\% & $40.4\times$ \\
    SOAP+GradNorm  & 2.86e-05$\pm$2.19e-05 & 77\% & 3.16e-05$\pm$1.71e-05 & 3.41e-05$\pm$1.42e-05 & 3.23e-05$\pm$1.42e-05 & 3.22e-05$\pm$1.85e-05 & 2.69e-05$\pm$9.93e-06 & 37\% & $0.9\times$ \\
    \bottomrule
  \end{tabular}}
\end{table}

\subsection{Nernst--Planck--Poisson --- Final $L_2$}

\begin{table}[H]
  \centering
  \resizebox{\textwidth}{!}{%
  \begin{tabular}{@{}lccccccc@{}}
    \toprule
    Method & $\varepsilon\!=\!1$ & CV & $\varepsilon\!=\!0.5$ & $\varepsilon\!=\!0.2$ & $\varepsilon\!=\!0.1$ & CV & Deg \\
    \midrule
    Adam           & 1.39e-03$\pm$1.45e-03 & 105\% & 3.67e-03$\pm$2.25e-03 & 7.16e-02$\pm$9.20e-03 & FAIL ($0.27$) &  & FAIL \\
    SOAP           & 2.85e-05$\pm$7.97e-06 & 28\% & 2.04e-05$\pm$6.85e-06 & 1.57e-04$\pm$9.35e-05 & 3.37e-03$\pm$5.21e-04 & 15\% & $118.4\times$ \\
    Adam+GradNorm  & 1.96e-03$\pm$2.85e-03 & 145\% & 1.61e-03$\pm$2.11e-03 & FAIL ($0.12$) & FAIL ($0.61$) &  & FAIL \\
    SOAP+GradNorm  & 1.81e-05$\pm$1.56e-05 & 86\% & 3.48e-05$\pm$2.03e-05 & 3.27e-05$\pm$1.42e-05 & 4.20e-05$\pm$4.05e-05 & 96\% & $2.3\times$ \\
    \bottomrule
  \end{tabular}}
\end{table}

\subsection{2D NP+P+Stokes --- Training History}
\label{app:2d_training}

\Cref{fig:2d_adam_history} shows that Adam+GradNorm never brings $L_2$ below the $0.1$ failure threshold in the 50000 epochs; the persistent oscillation around a near-constant, quasi-neutral state indicates that longer training alone would not yield convergence. In \cref{fig:2d_soap_history}, SOAP+GradNorm steadily reduces $L_2$ after an initial transient, with errors concentrated near the Debye layer. \Cref{fig:2d_adam_sn_history,fig:2d_soap_sn_history} present the corresponding diagnostics at $\varepsilon = 0.01$ with the single-network architecture. Adam+GradNorm remains near the quasi-neutral state, whereas SOAP+GradNorm resolves the Debye layer.

\begin{figure}[H]
  \centering
  \includegraphics[width=\textwidth]{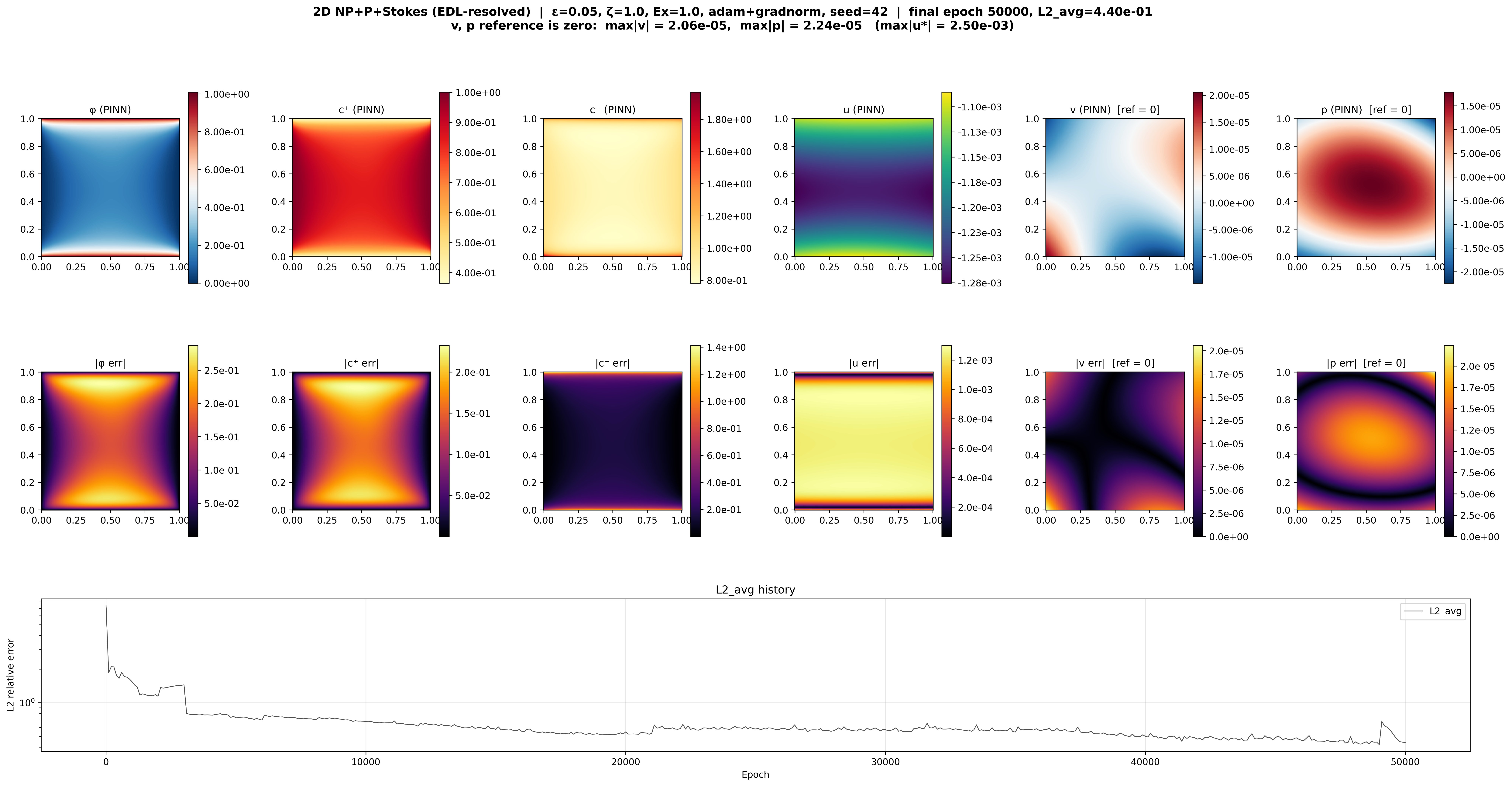}
  \caption{2D Nernst--Planck--Poisson (NP+P)+Stokes training diagnostics for Adam+GradNorm ($\varepsilon = 0.05$, $\zeta = 1.0$, single seed). Top rows: predicted fields at the final epoch and pointwise absolute errors; $v$ and $p$ are shown as deviations from a zero reference ($\max|\hat{v}| = 2.1 \times 10^{-5}$, $\max|\hat{p}| = 2.2 \times 10^{-5}$). Bottom: $L_2$ training history over 50000 epochs.}
  \label{fig:2d_adam_history}
\end{figure}

\begin{figure}[H]
  \centering
  \includegraphics[width=\textwidth]{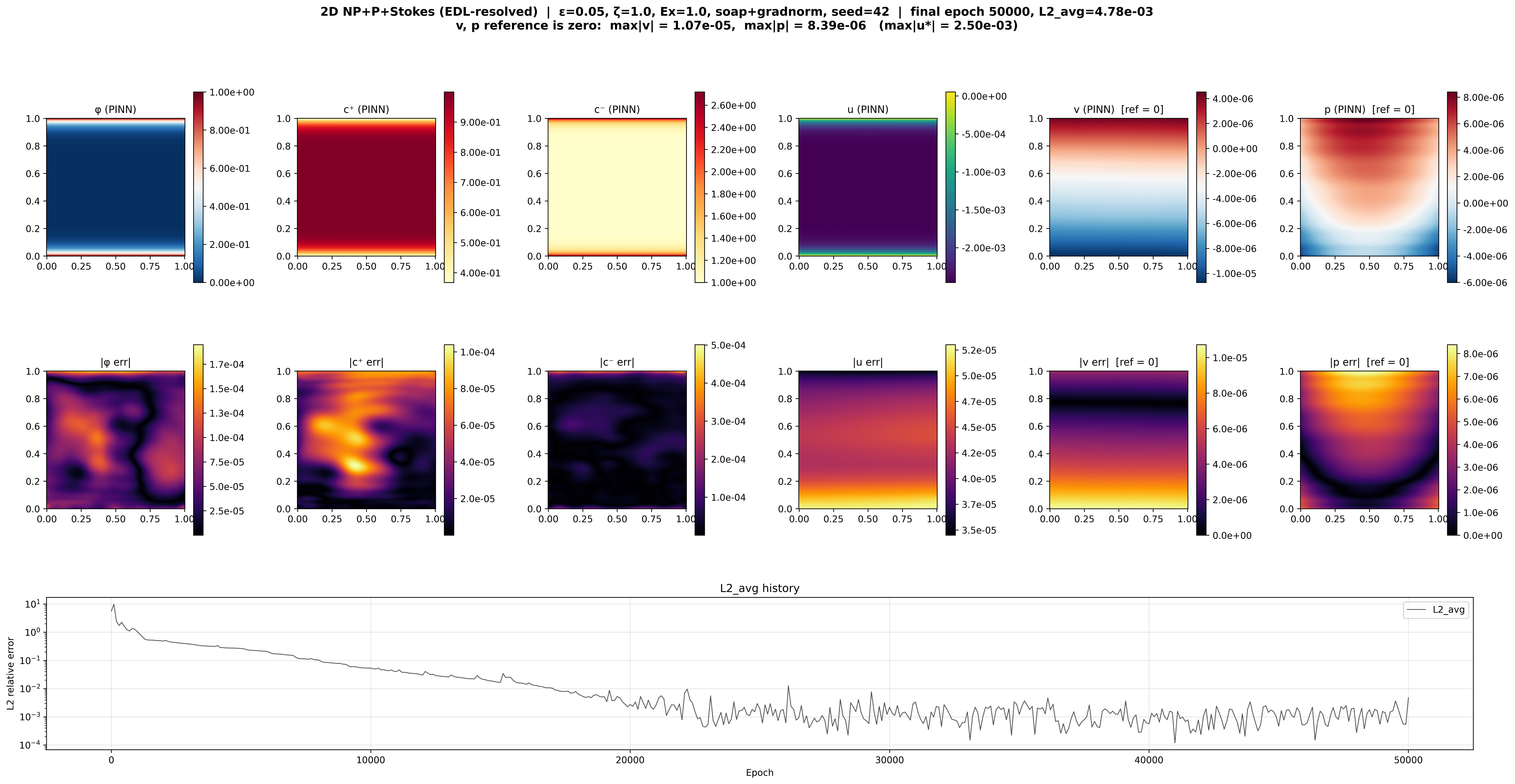}
  \caption{2D NP+P+Stokes training diagnostics for SOAP+GradNorm ($\varepsilon = 0.05$, $\zeta = 1.0$, single seed). Top rows: predicted fields at the final epoch and pointwise absolute errors; $v$ and $p$ are shown as deviations from a zero reference ($\max|\hat{v}| = 1.1 \times 10^{-5}$, $\max|\hat{p}| = 8.4 \times 10^{-6}$). Bottom: $L_2$ training history over 50000 epochs.}
  \label{fig:2d_soap_history}
\end{figure}

\begin{figure}[H]
  \centering
  \includegraphics[width=\textwidth]{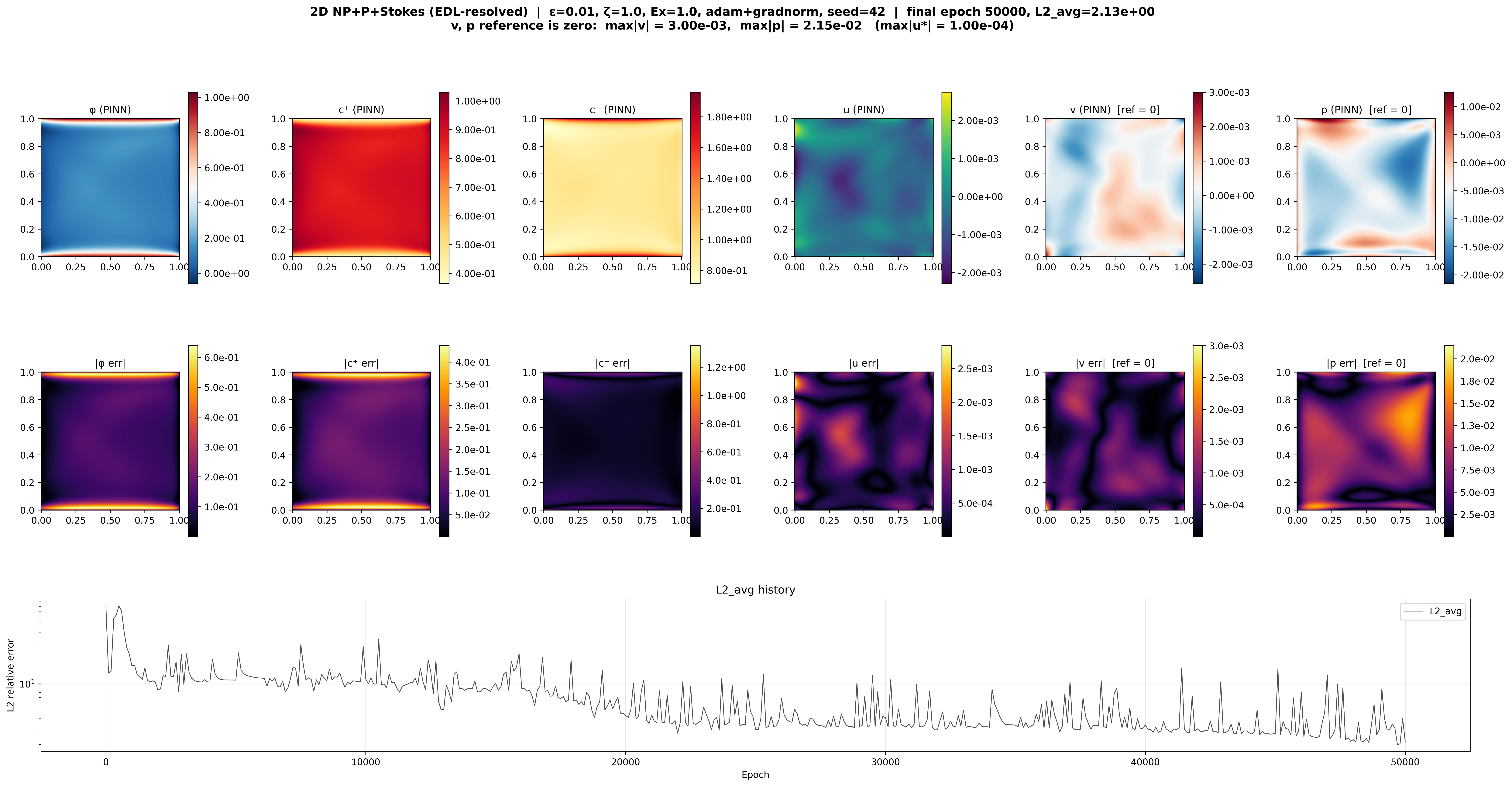}
  \caption{2D NP+P+Stokes training diagnostics for Adam+GradNorm with the single-network architecture ($\varepsilon = 0.01$, $\zeta = 1.0$, single seed). Top rows: predicted fields at the final epoch and pointwise absolute errors; $v$ and $p$ are shown as deviations from a zero reference ($\max|\hat{v}| = 3.0 \times 10^{-3}$, $\max|\hat{p}| = 2.1 \times 10^{-2}$). Bottom: $L_2$ training history over 50000 epochs.}
  \label{fig:2d_adam_sn_history}
\end{figure}

\begin{figure}[H]
  \centering
  \includegraphics[width=\textwidth]{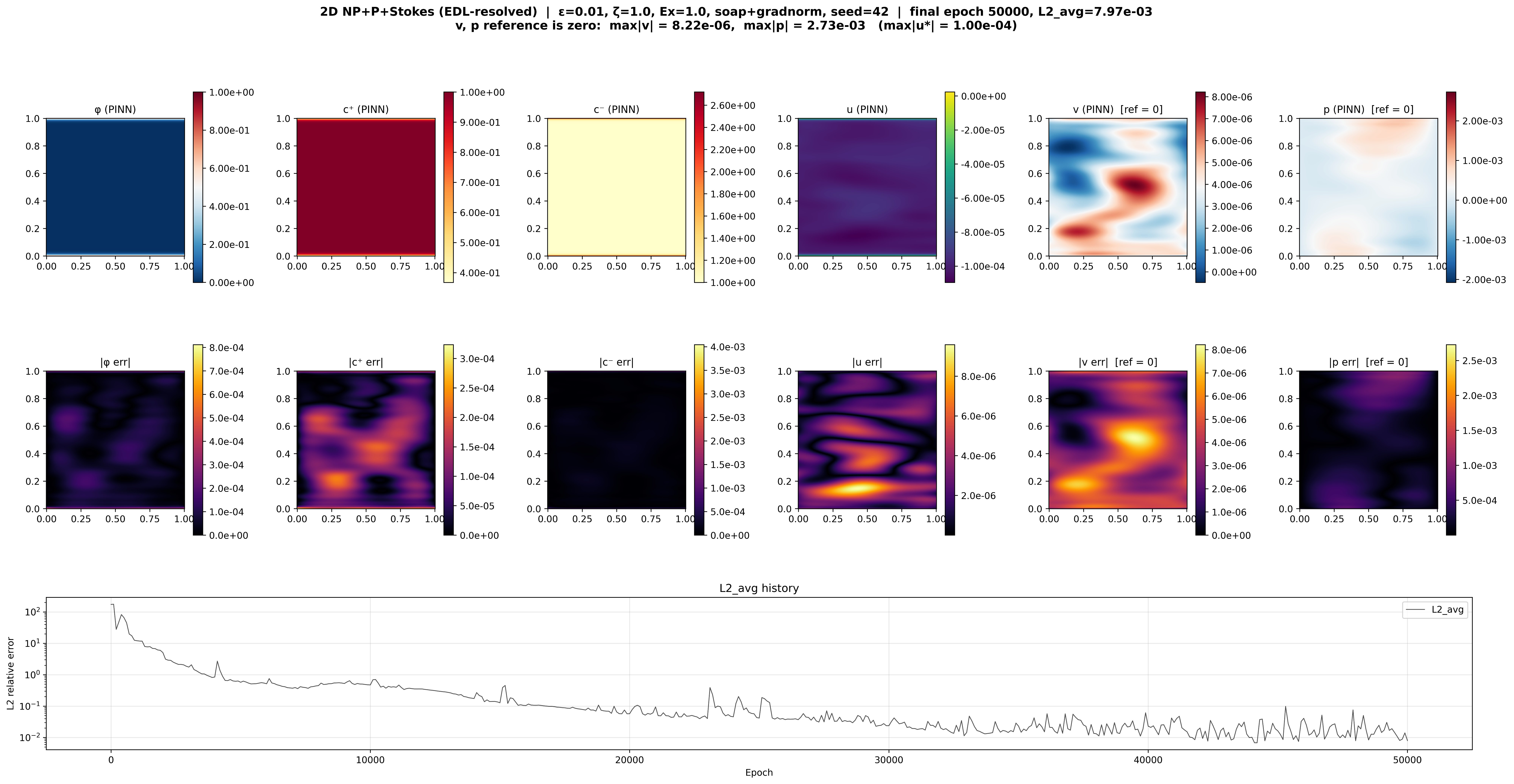}
  \caption{2D NP+P+Stokes training diagnostics for SOAP+GradNorm with the single-network architecture ($\varepsilon = 0.01$, $\zeta = 1.0$, single seed). Top rows: predicted fields at the final epoch and pointwise absolute errors; $v$ and $p$ are shown as deviations from a zero reference ($\max|\hat{v}| = 8.2 \times 10^{-6}$, $\max|\hat{p}| = 2.7 \times 10^{-3}$). Bottom: $L_2$ training history over 50000 epochs.}
  \label{fig:2d_soap_sn_history}
\end{figure}

\section{Implementation Details}
\label{app:implementation}

\begin{table}[H]
  \caption{Training configuration and hyperparameters.}
  \label{tab:implementation}
  \centering
  \small
  \begin{tabular}{@{}p{0.32\textwidth}p{0.62\textwidth}@{}}
    \toprule
    Parameter & Value \\
    \midrule
    \multicolumn{2}{@{}l}{\emph{Hardware and software}} \\
    Graphics processing unit (GPU) & NVIDIA GeForce RTX 4060 Ti (8\,GB), single GPU \\
    Environment & Windows; Python 3.11.14 (Anaconda); PyTorch 2.5.1 (CUDA 12.1, cuDNN 9.1.0) \\
    SOAP implementation & Official implementation of \citet{vyas2025soap} \\
    \midrule
    \multicolumn{2}{@{}l}{\emph{Architecture}} \\
    Networks (1D) & 4 hidden layers $\times$ 64 neurons per field (segregated, disjoint parameters) \\
    Networks (2D) & 5 hidden layers $\times$ 128 neurons per field \\
    Networks per system ($S$) & Thermoelasticity: 2; reaction--diffusion: 3; NP+P: 3; 2D NP+P+Stokes: 4 (segregated) or 1 (single-network) \\
    Network outputs & 1D: one scalar field per network. 2D segregated: three scalar networks plus one flow network with three outputs. 2D single-network: one network with six outputs \\
    Input dimension & One spatial coordinate (1D benchmarks); two spatial coordinates (2D benchmark) \\
    Activation / initialization & tanh / Xavier uniform \\
    Input normalization & $[-1, 1]$ \\
    \midrule
    \multicolumn{2}{@{}l}{\emph{Collocation points}} \\
    Thermoelasticity, reaction--diffusion & 200 interior (uniform) \\
    NP+P (1D) & 300 interior (half uniform, half in near-wall boundary layer) \\
    2D NP+P+Stokes & 3000 interior + 200 boundary per edge \\
    \midrule
    \multicolumn{2}{@{}l}{\emph{Optimization}} \\
    Adam & PyTorch defaults: $\beta = (0.9, 0.999)$, $\epsilon = 1\times10^{-8}$, weight decay $= 0$ \\
    SOAP & $\beta = (0.99, 0.999)$, preconditioner update frequency $=$ every 2 steps, weight decay $= 0$ \\
    Learning rate & $1 \times 10^{-3}$ (thermoelasticity, reaction--diffusion); $3 \times 10^{-4}$ (NP+P, 2D) \\
    Learning-rate schedule & Constant \\
    Batch size & Full batch (all collocation points per step) \\
    GradNorm & Update frequency $=$ 1000 steps, exponential moving average (EMA) momentum $= 0.9$ \\
    \midrule
    \multicolumn{2}{@{}l}{\emph{Training and evaluation}} \\
    Epochs & 30000 (1D); 50000 (2D) \\
    Seeds & Three per configuration \\
    Total runs & 210 (main study) + 12 ($\varepsilon = 0.01$ architecture comparison) $=$ 222 \\
    Exclusion criteria & Runs with final-epoch relative $L_2 > 0.1$ reported as FAIL; no runs otherwise excluded \\
    Metric & Final-epoch relative $L_2$, averaged over field variables and seeds \\
    Coupling degradation ratio & $\bar{L}_2(\text{strong}) / \bar{L}_2(\text{weak})$ \\
    \midrule
    \multicolumn{2}{@{}l}{\emph{Spectral measurements (\cref{sec:ntk_verification})}} \\
    Networks / collocation & 4 hidden layers $\times$ 64 neurons per field; 200 interior points \\
    Coupling grid & Thermoelasticity $\gamma \in \{0.1, 0.5, 1, 2, 5, 10, 25, 50, 100\}$; reaction--diffusion $k_{\mathrm{f}} \in \{0.5, 30\}$; NP+P $\varepsilon \in \{1.0, 0.1\}$ \\
    Rank tolerance & Singular vectors retained for $\sigma > 1\times10^{-14}\,\sigma_{\max}$; $D_{jj} < 1\times10^{-30}$ clamped to zero \\
    Initialization measurements & Three seeds per coupling value \\
    Training snapshots & Single seed, Adam+GradNorm, 30000 epochs; 16 snapshots at epochs 0, 2000, 4000, \ldots, 30000 \\
    Spectral run count & 39 initialization measurements + 6 snapshot runs; separate from the 222 accuracy runs \\
    \bottomrule
  \end{tabular}
\end{table}

\end{document}